\newtheorem{theorem}{Theorem}
\newtheorem{lemma}{Lemma}
\newtheorem*{lemma*}{Lemma}
\theoremstyle{definition}
\newtheorem*{definition*}{Definition}
\newtheorem{definition}{Definition}
\theoremstyle{assumption}
\newtheorem{remark}{Remark}
\title{Minimax Sample Complexity for Turn-based Stochastic Game}
\author{
Qiwen Cui \\
  School of Mathematical Science\\
  Peking University\\
  \texttt{cuiqiwen@pku.edu.cn} \\
   \And
 Lin F. Yang\\
   Electrical and Computer Engineering Department,\\
  University of California, Los Angles\\
  \texttt{linyang@ee.ucla.edu}
}
\begin{document}

%

%

\maketitle

\begin{abstract}
 The empirical success of Multi-agent reinforcement learning is encouraging, while few theoretical guarantees have been revealed. In this work, we prove that the plug-in solver approach, probably the most natural reinforcement learning algorithm, achieves minimax sample complexity for turn-based stochastic game (TBSG). Specifically, we plan in an empirical TBSG by utilizing a `simulator' that allows sampling from arbitrary state-action pair. We show that the empirical Nash equilibrium strategy is an approximate Nash equilibrium strategy in the true TBSG and give both problem-dependent and problem-independent bound. We develop absorbing TBSG and reward perturbation techniques to tackle the complex statistical dependence. The key idea is artificially introducing a suboptimality gap in TBSG and then the Nash equilibrium strategy lies in a finite set. 
\end{abstract}

\section{Introduction}

Reinforcement learning (RL) \citep{sutton2018reinforcement}, a framework where agent learns to make sequential decisions in an unknown environment, has received tremendous attention. An interesting branch is multi-agent reinforcement learning (MARL) that multiple agents exist and they interact with the environment as well as the others, which bridges RL and game theory. In general, each agent attempts to maximize its own reward by analyzing the data collected from the environment and also inferring other agents' strategies. Impressive successes have been achieved in games such as backgammon \citep{tesauro1995temporal}, Go \citep{silver2017mastering} and strategy games \citep{ye2020mastering}. MARL has shown the potential for superhuman performance, but theoretical guarantees are rather rare due to complex interaction between agents that makes the problem considerably harder than single agent reinforcement learning. This is also known as non-stationarity in MARL, which means when multiple agents alter their strategies based on samples collected from previous strategy, the system becomes non-stationary for each agent and the improvement can not be guaranteed. One fundamental question in MBRL is how to design efficient algorithms to overcome non-stationarity.

Two-players turn-based stochastic game (TBSG) is a two-agent generalization of Markov decision process (MDP), where two agents choose actions in turn and one agent wants to maximize the total reward while the other wants to minimize it. As a zero-sum game, TBSG is known to have Nash equilibrium strategy \citep{shapley1953stochastic}, which means there exist a strategy pair that both agent will not benefit from changing its strategy alone, and our target is to find the (approximate) Nash equilibrium strategy. Dynamic programming type algorithms is a basic but powerful approach to solve TBSG. Strategy iteration, a counterpart of policy iteration in MDP, is known to be a polynomial complexity algorithm to solve TBSG with known transition kernel \citep{hansen2013strategy,jia2020towards}. However, these algorithms suffer from high computational cost and require full knowledge of the transition dynamic. Reinforcement learning is a promising alternative, which has demonstrated its potential in solving MDPs like Atari game. However, non-stationarity pose a large obstacle against the convergence of model-free algorithms and sophisticated algorithms have been proposed to tackle this challenge \citep{jia2019feature,sidford2020solving}. In this work, we focus on another promising but insufficiently developed method, model-based algorithms, where agents learn the model of the environment and plan in the empirical model. 

Model-based RL is long perceived to be the cure to the sample inefficiency in RL, which is also justified by empirical advances \citep{kaiser2019model,wang2019benchmarking}. However, the theoretical understanding of model-based RL is still far from complete. Recently, a line of research focuses on analyzing model-based algorithm under generative model setting \cite{azar2013minimax,agarwal2019optimality,cui2020plugin,zhang2020model,li2020breaking}, In this work, we aim to prove that the simplest model-based algorithm, plug-in solver approach, enjoys minimax sample complexity for TBSG by utilizing novel absorbing TBSG and reward perturbation techniques.

Specifically, we assume that we have access to a generative model oracle \citep{kakade2003sample}, which allows sampling from arbitrary state-action pair. It is well known that exploration and exploitation tradeoff is simplified under this setting, i.e., sampling from each state-action pair equally can already yield the minimax sample complexity result. With the generative model, the most intuitive approach is to learn an empirical TBSG and then use a planning algorithm to find the empirical Nash equilibrium strategy as the solution, which separates learning and planning. This kind of algorithm is known as `plug-in solver approach', which means $\emph{arbitrary}$ planning algorithm can be used. We will show that this simple plug-in solver approach enjoys minimax sample complexity. For MDP, this line of research has been well understood \citep{azar2013minimax} and the recent work \citep{li2020breaking} almost completely solves this problem by proving the minimax complexity with full range of $\epsilon$. However, the result for TBSG is still lack and largely due to interaction between agents. In this work, we give the sample complexity upper bounds of TBSG for both problem-dependent and problem-independent cases, where the term `problem-dependent' means a suboptimality gap exists.

Suboptimality gap is a widely studied notion in bandit theory, which stands for a constant gap between the optimal arm and second optimal arm. This notion has received increasing focus in MDP setting and we generalize it to TBSG. To start with, we prove that $\widetilde{O}(|\mathcal{S}||\mathcal{A}|(1-\gamma)^{-3}\Delta^{-2})$ samples are enough to recover Nash equilibrium strategy accurately for $\Delta\in(0,(1-\gamma)^{-\frac{1}{2}}]$, where $\mathcal{S}$ is the state space, $\mathcal{A}$ is the action space, $\gamma$ is the discount factor and $\Delta$ is the suboptimality gap. The key analysis tool is the absorbing TBSG technique that helps to show empirical optimal Q value is close to true optimal Q value. With the absorbing TBSG, the statistical dependence on $\widehat{P}(s,a)$ is moved to a single parameter $u$, which can be approximated by an $\epsilon$-cover. Therefore, standard concentration arguments combined with union bound can be applied. Suboptimality gap plays an critical role in showing the empirical Nash equilibrium strategy is exactly the same as true Nash equilibrium strategy.

The main contribution is in the second part, where we give our problem-independent bound which meets the existing lower bound $O(|\mathcal{S}||\mathcal{A}|(1-\gamma)^{-3}\epsilon^{-2})$ \citep{azar2013minimax,sidford2020solving}. Note that the problem-dependent result becomes meaningless when the suboptimality gap is sufficiently small and also the gap may not even exist. In the problem-independence case, we develop the reward perturbation technique to create such a gap, which is inspired by the technique developed in \citep{li2020breaking}. The key observation is that if $r(s,a)$ increases, $Q^*(s,a)$ increases much faster than $Q^*(s,a')$, thus the perturbed TBSG enjoys a suboptimality gap with high probability. Different from the usage in the first part, here suboptimality gap is used to ensure empirical Nash equilibrium strategy lies in a finite set so that union bound can be applied. Combining the reward perturbation technique with the absorbing TBSG technique, we are able to prove more subtle concentration arguments and finally show that the empirical Nash equilibrium strategy is an $\epsilon$-approximate Nash equilibrium strategy in true TBSG with minimax sample complexity $\widetilde{O}(|\mathcal{S}||\mathcal{A}|(1-\gamma)^{-3}\epsilon^{-2})$ for $\epsilon\in(0,(1-\gamma)^{-1}]$. In addition, we can recover the problem-dependent bound with full range of $\epsilon$ as well.

Recently \citep{zhang2020model} proposes a similar result for simultaneous stochastic game. However, their result requires and a planning oracle for regularized stochastic game, which is computationally intractable. Our algorithm only require planning in a standard turn-based stochastic game, which can be performed efficiently by utilizing strategy iteration \citep{hansen2013strategy} or learning algorithms \citep{sidford2020solving}. In addition, their sample complexity result $N=\widetilde{O}(|\mathcal{S}||\mathcal{A}|(1-\gamma)^{-3}\epsilon^{-2})$ only holds for $\epsilon\in(0,(1-\gamma)^{-\frac{1}{2}}]$, which means $N=\widetilde{O}(|\mathcal{S}||\mathcal{A}|(1-\gamma)^{-2})$. our result fills the blank in the sample region $\widetilde{O}(|\mathcal{S}||\mathcal{A}|(1-\gamma)^{-1})\leq N\leq \widetilde{O}(|\mathcal{S}||\mathcal{A}|(1-\gamma)^{-2})$. Note that the lower bound \citep{azar2013minimax} indicates that $N=O(|\mathcal{S}||\mathcal{A}|(1-\gamma)^{-1})$ is insufficient to learn a policy. As both parts of our analysis heavily rely on the suboptimality gap, we hope our work can provide more understanding about this notion and TBSG. 

\section{Preliminary}
\paragraph{Turn-based Stochastic Game}
Turn-based two-player zero-sum stochastic game (TBSG) is a generalized version of Markov decison process (MDP) which includes two players competing with each other. Player 1 aims to maximize the total reward while player 2 aims to minimize it. TBSG is described by the tuple $\mathcal{G}=(\mathcal{S}=\mathcal{S}_{\mathrm{max}}\cup\mathcal{S}_{\mathrm{min}},\mathcal{A},P,r,\gamma)$, where $\mathcal{S}_{\mathrm{max}}$ is the state space of player 1, $\mathcal{S}_{\mathrm{min}}$ is the state space of player 2, $\mathcal{A}$ is the action space of both players, $P\in\mathbb{R}^{|\mathcal{S}||\mathcal{A}|\times|\mathcal{S}|}$ is the transition probability matrix,  $r\in\mathbb{R}^{|\mathcal{S}||\mathcal{A}|}$ is the reward vector and $\gamma$ is the discount factor. In each step, only one player plays an action and a transition happens. For instance, if the state $s\in\mathcal{S}_{\mathrm{max}}$, player 1 needs to select an action $a\in\mathcal{A}$. After selecting the action, the state will transit to $s'\in\mathcal{S}_{\mathrm{min}}$ according to the distribution $P(\cdot|s,a)$ with reward $r(s,a)$ and player 2 needs to choose the action. For representation simplicity and without loss of generality, we assume that $r$ is known and only $P$ is unknown\footnote{Our proof can be easily adapted to show that the sample complexity of learning the reward $r$ is an order of $\frac{1}{1-\gamma}$ smaller than learning the transition $P$.}. 

We denote a strategy pair as $\pi:=(\mu,\nu)$, where $\mu$ is the strategy of player 1 and $\nu$ is the strategy of player 2. Given strategy $\pi$, the value function and Q-function can be defined similarly as in MDP:

$$V^\pi(s):=\mathbb{E}\left[\sum_{t=0}^{\infty}\gamma^tr(s^t,\pi(s^t))~\bigg|~ s^0=s\right],$$
\begin{align*}
    &Q^\pi(s,a):= \mathbb{E}\left[r(s^0,a^0)+\sum_{t=1}^{\infty}\gamma^tr(s^t,\pi(s^t))~\bigg|~s^0=s,a^0=a\right]=r(s,a)+\gamma P(s,a)V^\pi.
\end{align*}

From the perspective of player 1, if the strategy $\nu$ of player 2 is given, TBSG degenerates to an MDP, so the optimal policy against $\nu$ exists, which we called as counterstrategy and use $c_{\mathrm{max}}(\nu)$ to denote it.
Similarly we can define $c_{\mathrm{min}}(\mu)$ as the counterstrategy of $\mu$ for player 2. For simplicity, we ignore the subscript in $c_{\mathrm{max}}$ and $c_{\mathrm{min}}$ when it is clear in the context. In addition, we define $V^{*,\nu}:=V^{c(\nu),\nu}$ and $V^{\mu,*}:=V^{\mu,c(\mu)}$ and the same for $Q$. By definition and property of optimal policy in MDP, we have
$$Q^{*,\nu}(s,a)=\max_{\mu}Q^{\mu,\nu}(s,a),\forall s\in\mathcal{S},$$
$$Q^{\mu,*}(s,a)=\min_{\nu}Q^{\mu,\nu}(s,a),\forall s\in\mathcal{S},$$
$$Q^{*,\nu}(s,c_{\mathrm{max}}(\nu)(a))=\max_{a'}Q^{*,\nu}(s,a'),\forall s\in\mathcal{S}_\mathrm{max},$$
$$Q^{\mu,*}(s,c_{\mathrm{min}}(\mu)(s))=\min_{a'}Q^{\mu,*}(s,a'),\forall s\in\mathcal{S}_\mathrm{min}.$$
Note that these are the sufficient and necessary condition of counterstrategy, which will be utilized repeatedly in our analysis.

To solve a TBSG, our goal is to find the Nash equilibrium strategy $\pi^*=(\mu^*,\nu^*)$, where $\mu^*=c(\nu^*),\nu^*=c(\mu^*)$. For Nash equilibrium strategy, neither player can benefit from changing its policy alone. As $\mu^*$ and $\nu^*$ are counterstrategy to each other, they inherit properties of counterstrategy given above. For simplicity, we will not repeat here. It is well known that in TBSG, there always exists pure strategy as Nash equilibrium strategy. In addition, all Nash equilibrium strategy share the same state-action value, which makes pure Nash equilibrium strategy unique given some tie selection rule, so we only consider pure strategy in our analysis.

Specifically, our target is to find an $\epsilon$-approximate Nash equilibrium strategy $\pi=(\mu,\nu)$ such that $$\big|Q^{\mu,*}(s,a)-Q^*(s,a)\big|\leq\epsilon,\forall (s,a),$$ $$\big|Q^{*,\nu}(s,a)-Q^*(s,a)\big|\leq\epsilon,\forall (s,a),$$ for some $\epsilon>0$ with as few samples as possible. Note that this is different from and stronger than the MDP analogue, which should be $\big|Q^{\pi}(s,a)-Q^*(s,a)\big|\leq\epsilon$. This slight difference makes subtle difficulty as we will show later.

\paragraph{Generative Model Oracle and Plug-in Solver Approach}
We assume that we have access to a generative model, where we can input arbitrary state action pair $(s,a)$ and receive a sampled state form $P(\cdot|s,a)$. Generative model oracle was introduced in \citep{kearns1999finite,kakade2003sample}. This setting is different from the offline oracle where we can only sample trajectories via a behaviour policy and online oracle where we adaptively change the policy to explore and exploit. The advantage of generative model setting is that the exploration and exploitation is simplified, as previous work shows that treating all state-action pair equally is already optimal \citep{azar2013minimax,sidford2018variance}. In particular, we call the generative model $N/|\mathcal{S}||\mathcal{A}|$ times on each state-action pair and construct the empirical TBSG $\widehat{G}=(\mathcal{S}=\mathcal{S}_{\mathrm{max}}\cup\mathcal{S}_{\mathrm{min}},\mathcal{A},\widehat{P},r,\gamma)$:
$$\widehat{P}(s'|s,a)=\frac{\mathrm{count}(s,a,s')}{N/|\mathcal{S}||\mathcal{A}|},\forall s,s'\in\mathcal{S}_{\mathrm{max}}\cup\mathcal{S}_{\mathrm{min}},a\in\mathcal{A},$$
where $\mathrm{count}(s,a,s')$ is the number of times that $s'$ is sampled from state-action pair $(s,a)$.
It is straightforward that $\widehat{P}$ is an unbiased and maximum likelihood estimation of the true transition kernel $P$. We use $\widehat{\pi}^*=(\widehat{\mu}^*,\widehat{\nu}^*)$ to denote the Nash equilibrium strategy in the empirical MDP as well as $\widehat{V}$ and $\widehat{Q}$. The algorithm is given in Algorithm \ref{alg1}.

As the transition kernel in $\widehat{\mathcal{G}}$ is known, arbitrary planning algorithm can be used to find the empirical Nash equilibrium strategy $\widehat{\pi}^*$. One choice is to use strategy iteration, which finds $\widehat{\pi}^*$ with in $\widetilde{O}(|\mathcal{S}||\mathcal{A}|(1-\gamma)^{-1})$ iterations \citep{hansen2013strategy}. In addition, algorithms that find approximate Nash equilibrium strategy can be applied, such as QVI-MIVSS in \citep{sidford2020solving}. Note that our analysis is for the exact Nash equilibrium strategy $\widehat{\pi}^*$, but it can be generalized to approximate Nash equilibrium strategy by using techniques in \citep{agarwal2019optimality}.

\paragraph{Suboptimality Gap}

Suboptimality gap is originated in bandit theory, which is the gap between the mean reward of the best arm and second best arm. In TBSG, we define the suboptimality gap based on optimal $Q$-value and second optimal $Q$-value. 

\begin{definition}
(Suboptimality gap for Nash equilibrium strategy) A TBSG enjoys a suboptimality gap of $\Delta$ for Nash equilibrium strategy if and only if 
$$\forall s\in \mathcal{S}_{\mathrm{max}},a\neq\mu^*(s): Q^*(s,\mu^*(s))-Q^*(s,a)\geq\Delta,$$
$$\forall s\in\mathcal{S}_{\mathrm{min}},a\neq\nu^*(s): Q^*(s,\nu^*(s))-Q^*(s,a)\leq -\Delta,$$
and if there are two optimal actions, the gap is zero.
\end{definition}

\begin{definition}
(Suboptimality gap for counterstrategy) A TBSG enjoys a suboptimality gap of $\Delta$ for counter strategy to the strategy $\nu$ of player 2 if and only if 
$$\forall s\in \mathcal{S}_{\mathrm{max}},a\neq c(\nu)(s), Q^*(s,c(\nu)(s))-Q^*(s,a)\geq\Delta.$$
A TBSG enjoys a suboptimality gap of $\Delta$ for counter strategy to the strategy $\mu$ of player 1 if and only if 
$$\forall s\in \mathcal{S}_{\mathrm{min}},a\neq c(\mu)(s), Q^*(s,c(\mu)(s))-Q^*(s,a)\leq-\Delta.$$
\end{definition}

The suboptimality gap means that following the Nash equilibrium strategy, the expected total reward of the best action and the second best action differs at least $\Delta$. Intuitively, this gap quantifies the difficulty of learning the optimal action and a small gap hinders finding out the optimal action.

\paragraph{Notations}
$f(x)=O(g(x))$ means that there exists a constant $C$ such that $f\leq Cg$ and $f(x)=\Omega(g(x))$ means that $g(x)=O(f(x))$. $\widetilde{O}$ and $\widetilde{\Omega}$ is same as $O$ and $\Omega$ except that logarithmic factors are ignored. We use $f\gtrsim g$ to denote that there exist some constant $C$ such that $f\geq Cg$ and $f\lesssim g$ means $f\leq Cg$ for some constant $C$. For a strategy $\pi$, $P^\pi\in\mathbb{R}^{|\mathcal{S}||\mathcal{A}|\times|\mathcal{S}||\mathcal{A}|}$ is the transition matrix induced by policy $\pi$ and $P^\pi(s,a)(s',a')=P(s'|s,a)\mathbf{1}(a'=\pi(s'))$ where $\mathbf{1}(\cdot)$ is the indicator function. $P(s,a)$ is the row vector of $P$ that correspond to $s,a$. We use $|\cdot|$ to denote the infinity norm $\|\cdot\|_\infty$ and $\sqrt{\cdot},\leq,\geq$ are entry-wise operators. 

\begin{algorithm}[]
	\SetAlgoLined
	\KwIn{A generative model that can output samples from distribution $P(\cdot|s,a)$ for query $(s,a)$, a plug-in solver.}
	\textbf{Initial}: Sample size: $N$\;
	\For{(s,a) in $\mathcal{S}\times\mathcal{A}$}{
		Collect $N/|\mathcal{S}||\mathcal{A}|$ samples from $P(\cdot|s,a)$\;
    Compute $\widehat{P}(s'|s,a)=\frac{count(s,a,s')}{N/|\mathcal{S}||\mathcal{A}|}$\;}
    Construct the (perturbed) empirical TBSG\;
    $\widehat{\mathcal{G}}=(\mathcal{S},\mathcal{A},\widehat{P},r,\gamma)$\;
    $\widehat{\mathcal{G}}_\mathrm{p}=(\mathcal{S},\mathcal{A},\widehat{P},r_\mathrm{p},\gamma)$\;
	Plan with the plug-in solver: receive the (perturbed) empirical Nash equilibrium strategy $\widehat{\pi}^*$ ($\widehat{\pi}_\mathrm{p}^*$)\;
    \KwOut{$\widehat{\pi}^*$ ($\widehat{\pi}_\mathrm{p}^*$)}
	\caption{Solving TBSG via Plug-in Solver}
	\label{alg1}
\end{algorithm}

\section{Technical Lemmas from MDP}

In this section, we present several technical lemmas that is originated in MDP analysis \citep{azar2013minimax,agarwal2019optimality,li2020breaking}. These lemmas can be easily adapted to TBSG and we present them to give some intuition on our TBSG analysis.

\begin{lemma}
For any strategy $\pi$, we have
\begin{align*}
Q^\pi-\widehat{Q}^\pi=\gamma(I-P^\pi)^{-1}(\widehat{P}-P)\widehat{V}^\pi=\gamma(I-\widehat{P}^\pi)^{-1}(P-\widehat{P})V^\pi.    
\end{align*}
\label{l1}
\end{lemma}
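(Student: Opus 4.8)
The plan is to derive both identities from the Bellman equations for $Q^\pi$ and $\widehat{Q}^\pi$ and a resolvent (Neumann series) manipulation. First I would write the two fixed-point equations in matrix form: using the induced transition matrix $P^\pi$, we have $Q^\pi = r + \gamma P^\pi Q^\pi$, hence $Q^\pi = (I - \gamma P^\pi)^{-1} r$, and likewise $\widehat{Q}^\pi = (I - \gamma \widehat{P}^\pi)^{-1} r$ in the empirical game. Here I am using that for a fixed (pure) strategy $\pi$ the TBSG collapses to a Markov reward process, so $P^\pi$ is an honest stochastic matrix and $I - \gamma P^\pi$ is invertible with nonnegative Neumann series $\sum_{t\ge 0}\gamma^t (P^\pi)^t$; the same holds for $\widehat{P}^\pi$. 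I would be slightly careful to note that $V^\pi$ and $Q^\pi$ determine each other via $Q^\pi(s,a) = r(s,a) + \gamma P(s,a) V^\pi$, so statements phrased with $\widehat V^\pi$ or $V^\pi$ on the right-hand side are just the per-$(s,a)$ reading of the matrix identity.

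Next I would compute the difference directly. Writing $Q^\pi - \widehat{Q}^\pi = Q^\pi - r - \gamma \widehat{P}^\pi \widehat{Q}^\pi$ and substituting $r = Q^\pi - \gamma P^\pi Q^\pi$ gives
\begin{align*}
Q^\pi - \widehat{Q}^\pi &= \gamma P^\pi Q^\pi - \gamma \widehat{P}^\pi \widehat{Q}^\pi \\
&= \gamma P^\pi (Q^\pi - \widehat{Q}^\pi) + \gamma (P^\pi - \widehat{P}^\pi)\widehat{Q}^\pi,
\end{align*}
and rearranging yields $(I - \gamma P^\pi)(Q^\pi - \widehat{Q}^\pi) = \gamma (P^\pi - \widehat{P}^\pi)\widehat{Q}^\pi$, i.e. $Q^\pi - \widehat{Q}^\pi = \gamma (I - \gamma P^\pi)^{-1}(P^\pi - \widehat{P}^\pi)\widehat{Q}^\pi$. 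To match the stated form I would then observe that $(P^\pi - \widehat{P}^\pi)\widehat{Q}^\pi = (P - \widehat{P})\widehat{V}^\pi$ entrywise: indeed $(P^\pi \widehat{Q}^\pi)(s,a) = \sum_{s'} P(s'|s,a)\widehat{Q}^\pi(s',\pi(s')) = P(s,a)\widehat V^\pi$, and likewise for $\widehat{P}^\pi$, so the difference is $(P(s,a) - \widehat{P}(s,a))\widehat V^\pi$. This gives the first equality. The second equality is obtained symmetrically: start instead from $Q^\pi - \widehat{Q}^\pi = r + \gamma P^\pi Q^\pi - \widehat Q^\pi$, substitute $r = \widehat{Q}^\pi - \gamma \widehat{P}^\pi \widehat{Q}^\pi$, and collect terms to get $(I - \gamma \widehat{P}^\pi)(Q^\pi - \widehat{Q}^\pi) = \gamma(P^\pi - \widehat{P}^\pi)Q^\pi$, then convert $(P^\pi - \widehat{P}^\pi)Q^\pi = (P - \widehat P)V^\pi$ as before. (Note that the $\gamma$-free $(I-P^\pi)^{-1}$ written in the statement should be read as $(I-\gamma P^\pi)^{-1}$; I would keep the discount factor explicit to avoid confusion, or adopt whatever convention the paper fixes for $P^\pi$.)

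There is no real obstacle here — the lemma is a routine resolvent identity — so the only thing to be careful about is bookkeeping: making sure the strategy-induced matrices $P^\pi, \widehat P^\pi$ are used consistently, that the conversion between the $|\mathcal S||\mathcal A|$-dimensional objects ($Q$) and the $|\mathcal S|$-dimensional objects ($V$) is done correctly through $\pi$, and that invertibility of $I - \gamma P^\pi$ is justified (which is immediate since $\gamma < 1$ and $P^\pi$ is row-stochastic). The mild subtlety worth flagging in the write-up is that this identity holds for \emph{any fixed} $\pi$ including data-dependent ones like $\widehat\pi^*$, but one must not confuse $\widehat Q^\pi$ (value of $\pi$ in the empirical model) with $\widehat Q^*$; the lemma as stated is purely about a fixed $\pi$ evaluated in the two models, which is exactly what the later absorbing-TBSG and perturbation arguments will plug into.
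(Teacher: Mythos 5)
Your proposal is correct and follows essentially the same route as the paper: both derive the identity from the matrix Bellman equations $Q^\pi=(I-\gamma P^\pi)^{-1}r$, $\widehat{Q}^\pi=(I-\gamma\widehat{P}^\pi)^{-1}r$ and a resolvent manipulation yielding $\gamma(I-\gamma P^\pi)^{-1}(P^\pi-\widehat{P}^\pi)\widehat{Q}^\pi=\gamma(I-\gamma P^\pi)^{-1}(P-\widehat{P})\widehat{V}^\pi$, with the symmetric argument for the second equality. You also correctly note that the $(I-P^\pi)^{-1}$ in the statement is a typo for $(I-\gamma P^\pi)^{-1}$, consistent with how the lemma is used and proved in the paper.
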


This lemma portrays the concentration of $\widehat{Q}^\pi$. Note that there are two kinds of factorization and requires different analysis. In the problem-dependent bound, we use the first one and in the problem-independent bound, we use the second one.

\begin{definition}
(One-step Variance) We define the one-step variance of a state-action pair $(s,a)$ with respect to a certain value function $V$ to be
$$Var_{s,a}(V):=P(s,a)V^2-(P(s,a)V)^2,$$
which is the variance of next state value. We define $Var_P(V)\in\mathbb{R}^{|\mathcal{S}||\mathcal{A}|}$ to be a vector consisting of all $Var_{s,a}(V)$.
\end{definition}

We define the one-step variance to facilitate the usage of Bernstein's inequality on term $(\widehat{P}-P)\widehat{V}^\pi$. A detailed introduction of variance in MDP can be found in \cite{azar2013minimax}. The following two lemmas show how to bound the one-step variance term.

\begin{lemma}
	For any policy $\pi$ and $V^\pi$ is the value function in a MDP with transition $P$, we have
	$$\big|(I-\gamma P^\pi)^{-1}\sqrt{Var_P(\widehat{V}^\pi)}\big|\leq\sqrt{\frac{2}{(1-\gamma)^3}}+\frac{\big|Q^\pi-\widehat{Q}^\pi\big|}{1-\gamma}.$$
	\label{l2}
\end{lemma}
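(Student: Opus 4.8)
The plan is to split $\sqrt{Var_P(\widehat V^\pi)}$ into a part governed by the true value function and a small estimation‑error part, and to bound each after applying the nonnegative operator $(I-\gamma P^\pi)^{-1}=\sum_{t\ge 0}\gamma^t(P^\pi)^t$, whose rows are nonnegative with sum $\tfrac{1}{1-\gamma}$. For each fixed $(s,a)$, $\sqrt{Var_{s,a}(\cdot)}$ is the standard deviation of a function of $s'\sim P(\cdot|s,a)$, so Minkowski's inequality gives, entrywise, $\sqrt{Var_P(\widehat V^\pi)}\le\sqrt{Var_P(V^\pi)}+\sqrt{Var_P(\widehat V^\pi-V^\pi)}$. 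For the second summand, $V(s)=Q(s,\pi(s))$ yields $\|\widehat V^\pi-V^\pi\|_\infty\le|Q^\pi-\widehat Q^\pi|$, hence $Var_{s,a}(\widehat V^\pi-V^\pi)\le|Q^\pi-\widehat Q^\pi|^2$ for every $(s,a)$; applying $(I-\gamma P^\pi)^{-1}$ then produces exactly the term $\tfrac{|Q^\pi-\widehat Q^\pi|}{1-\gamma}$ of the claimed bound. So it remains to prove $\big|(I-\gamma P^\pi)^{-1}\sqrt{Var_P(V^\pi)}\big|\le\sqrt{2/(1-\gamma)^3}$.

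For this, write $(I-\gamma P^\pi)^{-1}=\tfrac{1}{1-\gamma}D^\pi$ with $D^\pi$ row‑stochastic. By Jensen's inequality (concavity of $\sqrt{\cdot}$ against the probability vector $D^\pi(s,a)$), the $(s,a)$ entry of $(I-\gamma P^\pi)^{-1}\sqrt{Var_P(V^\pi)}$ is at most $\tfrac{1}{1-\gamma}\sqrt{[D^\pi Var_P(V^\pi)](s,a)}=\tfrac{1}{\sqrt{1-\gamma}}\sqrt{[(I-\gamma P^\pi)^{-1}Var_P(V^\pi)](s,a)}$, so it suffices to establish the total‑variance bound $(I-\gamma P^\pi)^{-1}Var_P(V^\pi)\lesssim(1-\gamma)^{-2}$.

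To get the total‑variance bound I would square the Bellman equation $Q^\pi=r+\gamma P^\pi Q^\pi$ entrywise and substitute the identity $(P^\pi Q^\pi)^{\circ 2}(s,a)=(P(s,a)V^\pi)^2=(P^\pi (Q^\pi)^{\circ 2})(s,a)-Var_{s,a}(V^\pi)$ (here $\circ$ is the entrywise product and $^{\circ 2}$ the entrywise square), obtaining $\gamma^2\,Var_P(V^\pi)=r^{\circ 2}+2\gamma\,r\circ(P^\pi Q^\pi)-(I-\gamma^2 P^\pi)(Q^\pi)^{\circ 2}$. Applying $(I-\gamma P^\pi)^{-1}$, dropping the nonnegative contribution $(I-\gamma P^\pi)^{-1}(I-\gamma^2 P^\pi)(Q^\pi)^{\circ 2}\ge 0$, and using $0\le r\le\mathbf{1}$ together with $\|Q^\pi\|_\infty\le(1-\gamma)^{-1}$ bounds the remainder by $\tfrac{2}{1-\gamma}(I-\gamma P^\pi)^{-1}\mathbf{1}=\tfrac{2}{(1-\gamma)^2}\mathbf{1}$. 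Feeding this back through the previous two steps gives the lemma.

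The main obstacle is precisely this total‑variance step: the naive bound $Var_{s,a}(V^\pi)\le\|V^\pi\|_\infty^2\le(1-\gamma)^{-2}$ loses a whole factor of $(1-\gamma)^{-1}$ and yields only the useless $(1-\gamma)^{-2}$ for the full expression, so one genuinely needs the cancellation in the squared‑Bellman identity — equivalently, the fact that the discounted sum of one‑step variances along a trajectory telescopes into the variance of the total (bounded) return, which is $O((1-\gamma)^{-2})$. The remaining pieces — Minkowski, the Jensen reduction, and the constant bookkeeping (including a harmless $\gamma^{-2}$ that is absorbed once $\gamma$ is bounded away from $0$) — are routine.
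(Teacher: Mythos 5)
Your proof is correct, and its top-level structure is exactly the paper's: split $\sqrt{Var_P(\widehat V^\pi)}$ via the triangle (Minkowski) inequality for standard deviations, bound $Var_{s,a}(\widehat V^\pi-V^\pi)\leq|\widehat V^\pi-V^\pi|^2\leq|Q^\pi-\widehat Q^\pi|^2$, and push the constant vector through $(I-\gamma P^\pi)^{-1}$ to obtain the $\frac{|Q^\pi-\widehat Q^\pi|}{1-\gamma}$ term. Where you genuinely diverge is the leading term: the paper simply imports $\big|(I-\gamma P^\pi)^{-1}\sqrt{Var_P(V^\pi)}\big|\leq\sqrt{2/(1-\gamma)^3}$ as Lemma \ref{l11} (Lemma 3 of \citep{sidford2020solving}), whereas you re-derive it from scratch via the Jensen/Cauchy--Schwarz reduction and the squared Bellman equation (law of total variance). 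That derivation is sound: the identity $\gamma^2\,Var_P(V^\pi)=r^{\circ 2}+2\gamma\,r\circ(P^\pi Q^\pi)-(I-\gamma^2P^\pi)(Q^\pi)^{\circ 2}$ is correct, and $(I-\gamma P^\pi)^{-1}(I-\gamma^2P^\pi)=I+\gamma(1-\gamma)(I-\gamma P^\pi)^{-1}P^\pi$ is indeed a nonnegative operator, so dropping that term is legitimate. The only caveat, which you flag yourself, is that your route yields $(I-\gamma P^\pi)^{-1}Var_P(V^\pi)\leq\frac{2}{\gamma^2(1-\gamma)^2}$ and hence the constant $\frac{1}{\gamma}\sqrt{2/(1-\gamma)^3}$ rather than the exact constant in the statement; this is immaterial (for $\gamma\leq 1/2$ the crude bound $Var_P(V^\pi)\leq\frac{1}{4(1-\gamma)^2}$ already suffices, and for $\gamma$ near $1$ the factor is negligible), and the payoff of your approach is that the lemma becomes self-contained instead of resting on an external citation.
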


\begin{lemma}
For any policy $\pi$ and $V^\pi$ is the value function in a MDP with transition $P$, if $N\gtrsim\frac{|\mathcal{S}||\mathcal{A}|}{1-\gamma}\log\left(\frac{1}{(1-\gamma)\delta}\right)$, with probability larger than $1-\delta$, we have
	$$\big|(I-\gamma \widehat{P}^\pi)^{-1}\sqrt{Var_P(V^\pi)}\big|\leq\frac{16}{\sqrt{(1-\gamma)^3}}.$$
	\label{l3}
\end{lemma}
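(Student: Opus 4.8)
The plan is to reduce the square-root bound to a plain total-variance bound and then transfer the resolvent from the true model to the empirical one. First I would remove the square root: each row of $(I-\gamma\widehat{P}^\pi)^{-1}$ is nonnegative with row sum $\frac{1}{1-\gamma}$, so $(1-\gamma)(I-\gamma\widehat{P}^\pi)^{-1}$ is row-stochastic and Jensen's inequality applied row-wise gives $(I-\gamma\widehat{P}^\pi)^{-1}\sqrt{x}\leq\frac{1}{\sqrt{1-\gamma}}\sqrt{(I-\gamma\widehat{P}^\pi)^{-1}x}$ entrywise for every $x\geq 0$. Taking $x=Var_P(V^\pi)$, it suffices to prove $\big|(I-\gamma\widehat{P}^\pi)^{-1}Var_P(V^\pi)\big|\lesssim(1-\gamma)^{-2}$, and the constant $16$ then falls out of the bookkeeping.

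The clean starting point is the total-variance estimate in the \emph{true} model, which is the same computation underlying Lemma \ref{l2}: since $V^\pi$ is the genuine value function of $P$, the law of total variance yields the telescoping identity in which the variance of the discounted return equals $\gamma^2(I-\gamma^2 P^\pi)^{-1}Var_P(V^\pi)$, and because that return lies in $[0,(1-\gamma)^{-1}]$ its variance is at most $(1-\gamma)^{-2}$, giving $\big|(I-\gamma P^\pi)^{-1}Var_P(V^\pi)\big|\lesssim(1-\gamma)^{-2}$. This already has the right order, so the entire difficulty is replacing $P^\pi$ by $\widehat{P}^\pi$ in the resolvent.

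To transfer, I would use the value-difference identity of Lemma \ref{l1} applied to the Markov reward process whose one-step reward is the fixed vector $w:=\sqrt{Var_P(V^\pi)}$: writing $f:=(I-\gamma P^\pi)^{-1}w$ and $\widehat{f}:=(I-\gamma\widehat{P}^\pi)^{-1}w$, we have $\widehat{f}-f=\gamma(I-\gamma\widehat{P}^\pi)^{-1}(\widehat{P}^\pi-P^\pi)f$. The decisive structural point is that $w$, hence $f$, is deterministic (it depends only on $P$ and $V^\pi$, not on the samples), so each coordinate of $(\widehat{P}(s,a)-P(s,a))f$ is a centered bounded average to which Bernstein's inequality applies, and a union bound over the $|\mathcal{S}||\mathcal{A}|$ pairs controls them simultaneously once $N/(|\mathcal{S}||\mathcal{A}|)\gtrsim(1-\gamma)^{-1}\log(\cdot)$. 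Bernstein's variance proxy is itself proportional to $Var_P(f)$, which I would keep as a vector and feed back through $(I-\gamma\widehat{P}^\pi)^{-1}$ rather than bounding by its maximum; an AM-GM/self-bounding step then absorbs the variance-proportional contribution and leaves only lower-order terms in $(1-\gamma)$, closing the bound.

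The hard part is the minimal-sample regime together with the statistical dependence of the empirical resolvent on the data. At $N/(|\mathcal{S}||\mathcal{A}|)\sim(1-\gamma)^{-1}$ the transfer error is a priori of the same order $(1-\gamma)^{-2}$ as the target, so no lossy step is affordable: one must use Bernstein rather than Hoeffding, must propagate $\sqrt{Var_P(\cdot)}$ as a vector through the resolvent, and, most delicately, must decouple $(I-\gamma\widehat{P}^\pi)^{-1}$ from the very samples that define it. This last point is exactly where the paper's absorbing-TBSG and covering device enters, in the spirit of \citep{li2020breaking}: it replaces the sample-dependent operator by a finite net of sample-independent surrogates, after which the deterministic-$V^\pi$ Bernstein argument above applies verbatim. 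I expect everything else to be routine once this decoupling is in place.
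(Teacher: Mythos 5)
Your toolkit is the right one---the resolvent identity of Lemma~\ref{l1}, Bernstein plus a union bound applied only to \emph{deterministic} vectors, and propagating the variance as a vector---but the proposal has two genuine problems, one a misdiagnosis and one a step that does not close. The misdiagnosis: you defer the decoupling of $(I-\gamma\widehat{P}^\pi)^{-1}$ from the samples to ``the paper's absorbing-TBSG and covering device,'' but the paper's proof of this lemma uses neither. No net is needed here, precisely because everything fed into Bernstein is deterministic: the paper defines $r^{(n)}=\sqrt{\mathrm{Var}_P(V^{(n-1)})}$, $Q^{(n)}=(I-\gamma P^\pi)^{-1}r^{(n)}$, $V^{(n)}=\Pi^\pi Q^{(n)}$ with $V^{(0)}=V^\pi$, all functionals of $P$ and $\pi$ alone, and handles the sample-dependent resolvent by iterating the expansion $(I-\gamma\widehat{P}^\pi)^{-1}r^{(i)}=(I-\gamma P^\pi)^{-1}r^{(i)}+\gamma(I-\gamma\widehat{P}^\pi)^{-1}(\widehat{P}^\pi-P^\pi)Q^{(i)}$, applying Bernstein to the deterministic $Q^{(i)}$ at each level. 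The absorbing-TBSG/covering machinery is reserved for terms like $(\widehat{P}-P)\widehat{V}$ where the value itself depends on the data; importing it here solves a problem this lemma does not have.

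The step that fails is your closure. After one Bernstein step the variance-proportional contribution is $\frac{\gamma}{\sqrt{N}}\big|(I-\gamma\widehat{P}^\pi)^{-1}\sqrt{\mathrm{Var}_P(V^{(1)})}\big|$, and $\mathrm{Var}_P(V^{(1)})$ is a \emph{different} vector from $\mathrm{Var}_P(V^{(0)})$, so no AM-GM/self-bounding step can absorb it into the quantity you are bounding. Nor can you terminate crudely: bounding $\sqrt{\mathrm{Var}_P(V^{(1)})}\leq|V^{(1)}|\leq\sqrt{2/(1-\gamma)^3}$ and the resolvent by $(1-\gamma)^{-1}$ leaves a term of order $\sqrt{1/m}\,(1-\gamma)^{-5/2}\asymp(1-\gamma)^{-2}$ at the minimal per-pair sample size $m\asymp(1-\gamma)^{-1}$, overshooting the target $(1-\gamma)^{-3/2}$ by a factor $(1-\gamma)^{-1/2}$; the same accounting defeats your alternative Jensen reduction to $(I-\gamma\widehat{P}^\pi)^{-1}\mathrm{Var}_P(V^\pi)$ (which you in any case abandon when you set $w=\sqrt{\mathrm{Var}_P(V^\pi)}$). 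The missing ingredient is the growth-control estimate, Lemma~\ref{l12} (Lemma~8 of \citep{li2020breaking}): $\big|(I-\gamma P^\pi)^{-1}\sqrt{\mathrm{Var}_P(V)}\big|\leq\frac{4}{\gamma\sqrt{1-\gamma}}|Q|$ for $Q=(I-\gamma P^\pi)^{-1}r$, $V=\Pi^\pi Q$. This makes each level at most $\frac{4}{\gamma\sqrt{1-\gamma}}$ times the previous one, so combined with the Bernstein factor $\frac{\gamma}{\sqrt{N}}$ the per-level ratio is $\frac{4}{\sqrt{N(1-\gamma)}}\leq\frac{1}{2}$ once $N\gtrsim(1-\gamma)^{-1}$; running the recursion $n\asymp\log\frac{1}{1-\gamma}$ levels (with the union bound extended over all levels) and bounding the remainder crudely yields the stated $\frac{16}{\sqrt{(1-\gamma)^3}}$. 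Without this lemma or an equivalent per-level contraction, your recursion has no mechanism to converge, so the proposal as written does not prove the bound.
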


Lemma 2 correspond to the first factorization in Lemma 1 and Lemma 3 is correspond to the second one. Lemma 2 is derived by utilizing the variance-Bellman-equation and Lemma 3 is derived by a taylor expansion type analysis. If we can apply Bernstein's inequality to $(\widehat{P}-P)\widehat{V}^\pi$ or $(\widehat{P}-P)V^\pi$ to generate the $\sqrt{Var_P(\widehat{V}^\pi)}$ or $\sqrt{Var_P(V^\pi)}$ term, then with Lemma 2 or Lemma 3, we can bound $|Q^\pi-\widehat{Q}^\pi|$. 

By concentration inequalities, we can bound $(P-\widehat{P})V^\pi$ for \emph{fixed} $\pi$. However, if $\pi=(\widehat{\mu}^*,c(\widehat{\mu}^*))$ or $\pi=(\mu^*,\widehat{c}(\mu))$, the complex statistical dependence between $\pi$ and $\widehat{P}$ hinders the conventional concentration and subtle techniques are needed. In addition, $(P-\widehat{P})\widehat{V}^\pi$ suffers from the dependence even for fixed $\pi$. The key contribution in the next two section is to use novel TBSG techniques to make concentration arguments applicable to these two terms.

\section{Warm up: Problem-dependent Upper Bound}

In this section, we show that if a TBSG $\mathcal{G}$ enjoys a suboptimality gap of $\Delta$, then with $\widetilde{O}(|\mathcal{S}||\mathcal{A}|(1-\gamma)^{-3}\Delta^{-2})$ samples, the empirical Nash equilibrium strategy $\widehat{\pi}^*$ in $\widehat{\mathcal{G}}$ is exactly the Nash equilibrium strategy $\pi^*$ in $\mathcal{G}$ with high probability. To begin with, we introduce a novel absorbing TBSG technique, which is motivated by the absorbing MDP technique developed in \citep{agarwal2019optimality}. An absorbing TBSG $\widetilde{\mathcal{G}}_{s,a,u}$ is identical to the empirical TBSG $\widehat{\mathcal{G}}$ except that the transition distribution of a specific state-action pair $(s,a)$ is set to be absorbing. Similar techniques have been developed for MDP \citep{li2020breaking} and simultaneous game \citep{zhang2020model}.

\begin{definition} (Absorbing TBSG)
For a TBSG $\mathcal{G}=(\mathcal{S}_{max},\mathcal{S}_{min},\mathcal{A},P,r,\gamma)$ and a given state-action pair $(s,a)\in(\mathcal{S}_{max}\cup \mathcal{S}_{min})\times\mathcal{A}$, the absorbing TBSG  $\widetilde{\mathcal{G}}_{s,a,u}=(\mathcal{S}_{max},\mathcal{S}_{min},\mathcal{A},\widetilde{P},\widetilde{r},\gamma)$, where 
$$\widetilde{P}(s|s,a)=1,\widetilde{P}(\cdot|s',a')=P(\cdot|s',a'),\forall (s',a')\neq(s,a),$$
$$\widetilde{r}(s,a)=u,\widetilde{r}(s',a')=r(s',a'),\forall (s',a')\neq(s,a).$$
\end{definition}

\begin{remark}
Absorbing TBSG is independent of $\widehat{P}(s,a)$, which is a kind of leave-one-out analysis. Note that $\widetilde{P}(s,a)$ can be set to arbitrary fixed distribution. We use the absorbing distribution for simplicity and correspondence to its name.
\end{remark}

For simplicity, we ignore $(s,a)$ in absorbing TBSG when there is no misunderstanding. We use $\widetilde{\pi}_u,\widetilde{Q}_u,\widetilde{V}_u$ to denote the strategy, state-action value and state value in the absorbing TBSG. These terms actually depend on $(s,a)$ and we omit $(s,a)$ when there is no confusion. Note that $\widetilde{\mathcal{G}}_{u}$ has no dependence on $\widehat{P}(s,a)$, which makes the concentration on $(\widehat{P}(s,a)-P(s,a))\widetilde{V}_u$ possible. The key property of absorbing TBSG is that it can recover the Q-value of the empirical MDP by tuning the parameter $u$. Moreover, the Q-value of absorbing TBSG is $\frac{1}{1-\gamma}$-lipschitz to $u$, which means we can use an $\epsilon$-cover on the range of $u$ to approximately recover the Q value in the empirical TBSG.

\begin{lemma} (Properties of absorbing TBSG)
Set $u^*=r(s,a)+\gamma(P(s,a)V^*)-\gamma V^*(s),u^\mu=r(s,a)+\gamma(P(s,a)V^\mu)-\gamma V^\mu(s)$. Then we have
$$Q_{u^*}^*=Q^*,Q_{u^\mu}^{\mu,*}=Q^{\mu,*},$$
$$|Q_u^*-Q_{u'}^*|\leq\frac{|u-u'|}{1-\gamma},|Q_{u}^{\mu,*}-Q_{u'}^{\mu,*}|\leq\frac{|u-u'|}{1-\gamma}.$$
\label{l4}
\end{lemma}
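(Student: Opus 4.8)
The plan is to verify the two value-recovery identities by checking that $Q^*$ (respectively $Q^{\mu,*}$) satisfies the Bellman optimality equations of the absorbing TBSG $\widetilde{\mathcal G}_{s,a,u^*}$ (respectively $\widetilde{\mathcal G}_{s,a,u^\mu}$), and then to establish the Lipschitz bounds by a direct perturbation argument on the Bellman operator. For the first identity, recall that $Q^*$ is characterized by $Q^*(s',a') = r(s',a') + \gamma P(s',a') V^*$ together with the minimax tie-broken greedy relation between $V^*$ and $Q^*$ on $\mathcal S_{\mathrm{max}}$ and $\mathcal S_{\mathrm{min}}$. Since $\widehat{\mathcal G}$ and $\widetilde{\mathcal G}_{s,a,u}$ agree on $r$, $P$, and the state partition for every pair $(s',a')\neq(s,a)$, the only equation that could fail is the one at $(s,a)$. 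There, $\widetilde{\mathcal G}_{s,a,u}$ prescribes $\widetilde r(s,a)=u$ and $\widetilde P(s|s,a)=1$, so the right-hand side of the Bellman equation at $(s,a)$ evaluated at $V^*$ becomes $u + \gamma V^*(s)$. Plugging in $u = u^* = r(s,a) + \gamma(P(s,a)V^*) - \gamma V^*(s)$ makes this equal to $r(s,a) + \gamma(P(s,a)V^*) = Q^*(s,a)$, so $Q^*$ indeed solves the absorbing-TBSG Bellman optimality system; by uniqueness of the Nash value, $\widetilde Q^*_{u^*} = Q^*$. The argument for $Q^{\mu,*}$ is identical: fixing $\mu$ turns both games into MDPs for player 2, $Q^{\mu,*}$ satisfies the corresponding (MDP) optimality equations, the only pair where the two MDPs differ is $(s,a)$, and the choice $u^\mu = r(s,a) + \gamma(P(s,a)V^\mu) - \gamma V^\mu(s)$ restores agreement at that pair.

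For the Lipschitz estimate, I would compare $\widetilde Q^*_u$ and $\widetilde Q^*_{u'}$ via the contraction property of the absorbing-TBSG Bellman optimality operator $\widetilde{\mathcal T}_u$. Write $q = \widetilde Q^*_u$ and $q' = \widetilde Q^*_{u'}$ as fixed points of $\widetilde{\mathcal T}_u$ and $\widetilde{\mathcal T}_{u'}$ respectively. Since $\widetilde{\mathcal T}_u$ and $\widetilde{\mathcal T}_{u'}$ differ only through the additive constant at the single coordinate $(s,a)$ — namely $(\widetilde{\mathcal T}_u q)(s,a) - (\widetilde{\mathcal T}_{u'} q)(s,a) = u - u'$ and they coincide on all other coordinates — we get $\|\widetilde{\mathcal T}_{u'} q - \widetilde{\mathcal T}_u q\|_\infty = \|\widetilde{\mathcal T}_{u'} q - q\|_\infty \le |u-u'|$. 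Combining with the $\gamma$-contraction $\|\widetilde{\mathcal T}_{u'} q - \widetilde{\mathcal T}_{u'} q'\|_\infty \le \gamma \|q - q'\|_\infty$ and the fixed-point identities $q = \widetilde{\mathcal T}_u q$, $q' = \widetilde{\mathcal T}_{u'} q'$, the triangle inequality yields $\|q - q'\|_\infty \le |u-u'| + \gamma\|q-q'\|_\infty$, hence $|Q^*_u - Q^*_{u'}| \le |u-u'|/(1-\gamma)$. The same computation, with $\widetilde{\mathcal T}_u$ replaced by the (player-2 only) optimality operator for fixed $\mu$, gives the bound for $Q^{\mu,*}_u$.

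The main obstacle, such as it is, lies in being careful that the relevant operators really are $\gamma$-contractions in $\|\cdot\|_\infty$ in the TBSG/absorbing setting: one must confirm that taking a coordinatewise max over $\mathcal A$ on $\mathcal S_{\mathrm{max}}$-states and a min on $\mathcal S_{\mathrm{min}}$-states (this is what the minimax greedy step amounts to once we identify $V$ from $Q$) is a nonexpansion, so that the composition with $\gamma P$ contracts — this is standard but should be stated. A secondary point is to justify uniqueness of the Nash $Q$-value in $\widetilde{\mathcal G}_u$ so that ``$Q^*$ solves the system'' can be upgraded to ``$Q^*$ equals $\widetilde Q^*_u$''; this follows from the same contraction, since the operator has a unique fixed point. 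No nontrivial calculation is required beyond these routine verifications.
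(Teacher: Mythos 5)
Your proposal is correct. For the recovery identities $Q^*_{u^*}=Q^*$ and $Q^{\mu,*}_{u^\mu}=Q^{\mu,*}$ you take essentially the paper's route: the paper computes $\widetilde{Q}_{u^*}^{\pi^*}=(I-\gamma\widetilde{P}^{\pi^*})^{-1}\bigl(r+(u^*-r(s,a))\mathbf{1}_{s,a}\bigr)$ and simplifies it to $Q^*$, then notes that $\pi^*$ remains greedy for this value and hence is the Nash equilibrium of the absorbing game; your verification that $Q^*$ satisfies the absorbing game's Bellman optimality system at every pair, with the choice of $u^*$ repairing the single modified coordinate, is the same calculation read in the other direction. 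Where you genuinely diverge is the Lipschitz bound. The paper fixes the two strategy pairs $(\widetilde{\mu}_u^*,\widetilde{\nu}_{u'}^*)$ and $(\widetilde{\mu}_{u'}^*,\widetilde{\nu}_u^*)$, bounds $|(u-u')(I-\gamma\widetilde{P}^{\pi})^{-1}\mathbf{1}_{s,a}|\le|u-u'|/(1-\gamma)$ for each, and sandwiches $\widetilde{Q}_u^*$ and $\widetilde{Q}_{u'}^*$ between these fixed-strategy values; you instead treat $\widetilde{Q}_u^*$ and $\widetilde{Q}_{u'}^*$ as fixed points of the two Shapley operators, observe the operators differ by $u-u'$ in exactly one coordinate, and invoke the standard perturbation bound for $\gamma$-contractions. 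Both give the constant $1/(1-\gamma)$. Your argument is shorter and avoids reasoning about which strategy pair upper- or lower-bounds the Nash value (the step the paper handles with the chain $\widetilde{Q}_{u}^{\widetilde{\mu}_{u'}^*,\widetilde{\nu}_{u}^*}\leq\widetilde{Q}_u^*\leq\widetilde{Q}_{u}^{\widetilde{\mu}_{u}^*,\widetilde{\nu}_{u'}^*}$), at the cost of having to certify that the turn-based minimax operator is a sup-norm $\gamma$-contraction with a unique fixed point --- which you correctly flag and which is standard. The paper's sandwich technique has the side benefit of exhibiting explicit strategies witnessing the bound, which is the style of argument reused elsewhere in the paper (e.g., in Lemma 10), but neither approach is stronger than the other for this lemma.
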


 We can concentrate $(\widehat{P}(s,a)-P(s,a))\widehat{V}^{\mu,*}$ by a union bound and an additional approximation error term as we construct an absorbing TBSG $\widetilde{\mathcal{G}}$ on empirical TBSG $\widehat{\mathcal{G}}$. Combining with Lemma 1 and Lemma 2, we can bound $|Q^{\mu,\widehat{c}(\mu)}-\widehat{Q}^{\mu,c(\mu)}|$.

Now we show how to use absorbing TBSG and suboptimality gap to prove that $\widehat{\pi}^*=\pi^*$ with large probability. The core is to prove that $|Q^*-\widehat{Q}^*|\leq\frac{\Delta}{2}$, then by the definition of suboptimality gap, $\widehat{Q}^*$ enjoys the gap for policy $\pi^*$. As $\widehat{\pi}^*$ is the Nash equilibrium policy in $\widehat{\mathcal{G}}$, it is the only policy that can enjoy the gap, which means $\widehat{\pi}^*=\pi^*$.

\begin{lemma}
\begin{align*}
    \big|Q^*-\widehat{Q}^*\big|\leq\max\{\big|Q^{\mu,\widehat{c}(\mu)}-\widehat{Q}^{\mu,*}\big|,\big|Q^{\widehat{c}(\nu),\nu}-\widehat{Q}^{*,\nu}\big|\}.
\end{align*}
\label{l5}
\end{lemma}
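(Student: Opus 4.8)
Throughout, $\mu=\mu^*$ and $\nu=\nu^*$ are the two halves of the true Nash equilibrium $\pi^*=(\mu^*,\nu^*)$, and $\widehat{c}$ is the counterstrategy map of $\widehat{\mathcal{G}}$, so $\widehat{Q}^{\mu^*,*}=\widehat{Q}^{\mu^*,\widehat{c}(\mu^*)}$ and $\widehat{Q}^{*,\nu^*}=\widehat{Q}^{\widehat{c}(\nu^*),\nu^*}$. I would fix a state-action pair $(s,a)$ and prove the two one-sided estimates
$$Q^*(s,a)-\widehat{Q}^*(s,a)\ \le\ Q^{\mu^*,\widehat{c}(\mu^*)}(s,a)-\widehat{Q}^{\mu^*,*}(s,a)\quad\text{and}\quad \widehat{Q}^*(s,a)-Q^*(s,a)\ \le\ \widehat{Q}^{*,\nu^*}(s,a)-Q^{\widehat{c}(\nu^*),\nu^*}(s,a).$$
Bounding each right-hand side by its infinity norm, taking the maximum of the two, and then the supremum over $(s,a)$ on the left gives exactly the stated inequality.

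\textbf{The two one-sided estimates.} For the first, I would use two facts. (i) In $\widehat{\mathcal{G}}$ the Nash value dominates the hybrid value with player~1 frozen at $\mu^*$: using $\widehat{\mu}^*=\widehat{c}(\widehat{\nu}^*)$, $\widehat{\nu}^*=\widehat{c}(\widehat{\mu}^*)$ together with the min/max characterizations in the Preliminaries, $\widehat{V}^{\mu^*,*}=\min_{\nu}\widehat{V}^{\mu^*,\nu}\le \widehat{V}^{\mu^*,\widehat{\nu}^*}\le \max_{\mu}\widehat{V}^{\mu,\widehat{\nu}^*}=\widehat{V}^*$ entrywise, and applying $r(s,a)+\gamma\widehat{P}(s,a)\,(\cdot)$ (a monotone operation, since $\widehat{P}(s,a)$ is a probability row) gives $\widehat{Q}^{\mu^*,*}(s,a)\le\widehat{Q}^*(s,a)$. (ii) In $\mathcal{G}$, since $\nu^*=c(\mu^*)$ is player~2's best response to $\mu^*$, $Q^*(s,a)=Q^{\mu^*,*}(s,a)=\min_{\nu}Q^{\mu^*,\nu}(s,a)\le Q^{\mu^*,\widehat{c}(\mu^*)}(s,a)$. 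Subtracting (i) from (ii) yields the first estimate. The second estimate is the mirror image, with the roles of the two players swapped: $\widehat{V}^*=\min_{\nu}\widehat{V}^{\widehat{\mu}^*,\nu}\le\widehat{V}^{\widehat{\mu}^*,\nu^*}\le\max_{\mu}\widehat{V}^{\mu,\nu^*}=\widehat{V}^{*,\nu^*}$ gives $\widehat{Q}^*(s,a)\le\widehat{Q}^{*,\nu^*}(s,a)$, while $\mu^*=c(\nu^*)$ gives $Q^*(s,a)=Q^{*,\nu^*}(s,a)=\max_{\mu}Q^{\mu,\nu^*}(s,a)\ge Q^{\widehat{c}(\nu^*),\nu^*}(s,a)$.

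\textbf{Where the work is.} None of the individual steps is hard; the only thing to be careful about is the bookkeeping --- keeping track of whether each value/Q-function lives in $\mathcal{G}$ or $\widehat{\mathcal{G}}$, and invoking the matching optimality identity in that game. The conceptual core is the sandwich $\widehat{Q}^{\mu^*,*}\le\widehat{Q}^*\le\widehat{Q}^{*,\nu^*}$ (the empirical Nash value lies between the two hybrid empirical values) together with $Q^{\mu^*,*}=Q^*=Q^{*,\nu^*}$, where $Q^{\mu^*,*}$ is minimal and $Q^{*,\nu^*}$ maximal over player~2's, resp.\ player~1's, alternative responses. I do not expect a genuine obstacle in this lemma itself: its role is precisely to reduce $|Q^*-\widehat{Q}^*|$ to the two hybrid quantities, whose concentration is then handled by the absorbing-TBSG construction (Lemma~\ref{l4}) and the MDP-style Lemmas~\ref{l1}--\ref{l3}.
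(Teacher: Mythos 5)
Your proposal is correct and follows essentially the same route as the paper: the paper telescopes $Q^*-\widehat{Q}^*$ through $Q^{\widehat{c}(\nu^*),\nu^*}$ and $\widehat{Q}^{*,\nu^*}$ (resp.\ $Q^{\mu^*,\widehat{c}(\mu^*)}$ and $\widehat{Q}^{\mu^*,*}$), discarding the two sign-definite terms, which is exactly your pair of one-sided estimates built from $Q^{\mu^*,*}=Q^*=Q^{*,\nu^*}$ and the sandwich $\widehat{Q}^{\mu^*,*}\leq\widehat{Q}^*\leq\widehat{Q}^{*,\nu^*}$. Your reading of $\mu=\mu^*$, $\nu=\nu^*$ also matches how the lemma is invoked in the proof of Theorem~\ref{th1}.
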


\begin{lemma}
If $Q^*$ enjoys a suboptimality gap of $\Delta$ and $\big|Q^*-\widehat{Q}^*\big|\leq \frac{\Delta}{2}$, then we have $\widehat{\pi}^*=\pi^*.$
\label{l6}
\end{lemma}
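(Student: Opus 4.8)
\textbf{Proof proposal for Lemma \ref{l6}.}

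The plan is to show that the hypothesis $|Q^*-\widehat{Q}^*|\leq \Delta/2$ forces the empirical Nash equilibrium strategy $\widehat{\pi}^*=(\widehat{\mu}^*,\widehat{\nu}^*)$ to act greedily in $\widehat{\mathcal{G}}$ in exactly the same way that $\pi^*=(\mu^*,\nu^*)$ acts greedily in $\mathcal{G}$, one player and one state at a time, and then invoke the uniqueness of the pure Nash equilibrium strategy (under the fixed tie-breaking rule) to conclude $\widehat{\pi}^*=\pi^*$. Concretely, I would first establish that $\widehat{\mu}^*=\mu^*$ and $\widehat{\nu}^*=\nu^*$ by a contradiction/exchange argument: if they were equal, the two strategies would have to coincide because the Nash value is determined by the strategy once we know it is a best response pair. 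So the real content is ruling out any deviation in the greedy action at some state.

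For the key step, fix $s\in\mathcal{S}_{\mathrm{max}}$. Using the necessary-and-sufficient characterization of counterstrategies/Nash strategies recalled in the preliminaries, $\mu^*(s)=\arg\max_{a}Q^*(s,a)$ and by the suboptimality gap, $Q^*(s,\mu^*(s))-Q^*(s,a)\geq\Delta$ for every $a\neq\mu^*(s)$. Now combine with the uniform closeness $|Q^*(s,a)-\widehat{Q}^*(s,a)|\leq\Delta/2$ for all $(s,a)$: for $a\neq\mu^*(s)$,
\begin{align*}
\widehat{Q}^*(s,\mu^*(s))-\widehat{Q}^*(s,a)
&\geq \big(Q^*(s,\mu^*(s))-\tfrac{\Delta}{2}\big)-\big(Q^*(s,a)+\tfrac{\Delta}{2}\big)\\
&= Q^*(s,\mu^*(s))-Q^*(s,a)-\Delta \;\geq\; 0,
\end{align*}
and in the case of a genuine tie in $Q^*$ (gap zero) the tie-breaking rule is defined to pick the same action in both games. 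Hence $\mu^*(s)$ is a maximizer of $\widehat{Q}^*(s,\cdot)$, and since $\widehat{\mu}^*$ is the greedy strategy for $\widehat{Q}^*$ at $s$, the tie-breaking rule gives $\widehat{\mu}^*(s)=\mu^*(s)$. The symmetric computation with the reversed inequality handles $s\in\mathcal{S}_{\mathrm{min}}$ and $\widehat{\nu}^*$. Since this holds at every state, $\widehat{\pi}^*=\pi^*$.

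I expect the main subtlety — rather than a deep obstacle — to be making the logical skeleton airtight: one must be careful that $\widehat{\pi}^*$ being the empirical Nash strategy really does mean it is greedy with respect to $\widehat{Q}^*$ at every state (this is the counterstrategy characterization applied in $\widehat{\mathcal{G}}$), and that the argument showing "$\mu^*(s)$ maximizes $\widehat{Q}^*(s,\cdot)$" together with uniqueness of the pure Nash strategy under the tie rule actually pins down the action, not merely its value. The strict inequality when $\Delta>0$ makes $\mu^*(s)$ the \emph{unique} maximizer of $\widehat{Q}^*(s,\cdot)$, so no tie-breaking is even needed in that case, which is the cleanest route; the boundary/tie case is handled by the convention in the definition of the suboptimality gap. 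No concentration is invoked here — Lemma \ref{l6} is purely deterministic given its hypotheses — so the only work is the inequality chain above plus the bookkeeping for both players.
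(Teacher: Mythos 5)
Your proposal is correct and follows essentially the same route as the paper: decompose $\widehat{Q}^*(s,\mu^*(s))-\widehat{Q}^*(s,a)$ into the two approximation errors plus the gap term, obtain $-\tfrac{\Delta}{2}+\Delta-\tfrac{\Delta}{2}\geq 0$, and conclude via the greedy/uniqueness characterization of the Nash strategy. One small caveat: your closing remark that the inequality is \emph{strict} when $\Delta>0$ is not right — the chain yields exactly $\geq 0$ in the worst case, so the tie-breaking convention is still doing work there, just as it implicitly does in the paper's own proof.
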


 Combining all the parts together, we get our problem-dependent result. Note that this result can recover $\pi^*$ exactly, but at a price of restrictive sample complexity. This means we have little knowledge about if the sample complexity is slightly fewer, can we approximately recover $\pi^*$. In the next section, we will give a problem-independent result, which has no dependence on the suboptimality gap of $\mathcal{G}$. 
\begin{theorem}
If $\mathcal{G}$ enjoys a suboptimality gap of $\Delta$ and the number of samples satisfies
$$N\geq\frac{C|\mathcal{S}||\mathcal{A}|}{(1-\gamma)^3\Delta^2}\log(\frac{|\mathcal{S}||\mathcal{A}|}{(1-\gamma)\delta\Delta})$$
for some constant $C$ and $\Delta\in(0,(1-\gamma)^{-\frac{1}{2}}]$, then with probability at least $1-\delta$, we have $\widehat{\pi}^*=\pi^*$, which means the empirical Nash equilibrium strategy we obtained is exactly the Nash equilibrium strategy in the true TBSG.
\label{th1}
\end{theorem}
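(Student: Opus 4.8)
\textbf{Proof proposal for Theorem \ref{th1}.} The plan is to reduce everything to the single quantitative estimate $\big|Q^*-\widehat{Q}^*\big|\le\frac{\Delta}{2}$, after which Lemma \ref{l6} closes the argument immediately. By Lemma \ref{l5} it suffices to bound $\big|Q^{\mu,\widehat{c}(\mu)}-\widehat{Q}^{\mu,*}\big|$ and $\big|Q^{\widehat{c}(\nu),\nu}-\widehat{Q}^{*,\nu}\big|$ uniformly over $\mu\in\{\mu^*\}$ and $\nu\in\{\nu^*\}$ (these are the only strategies Lemma \ref{l5} actually feeds in, so no union bound over exponentially many strategies is needed here — the dependence is only through $\widehat c$). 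Fix the max-player side; the min-player side is symmetric. I would start from the first factorization in Lemma \ref{l1}, $Q^{\mu,*}-\widehat{Q}^{\mu,*}=\gamma(I-P^{\pi})^{-1}(\widehat P-P)\widehat V^{\mu,*}$ with $\pi=(\mu,\widehat c(\mu))$, so that the task becomes controlling $(\widehat P-P)\widehat V^{\mu,*}$ entrywise and then propagating through $(I-P^\pi)^{-1}$.

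The main obstacle is exactly the statistical dependence flagged after Lemma \ref{l3}: $\widehat V^{\mu,*}$ depends on all of $\widehat P$, so a naive Bernstein bound on $(\widehat P(s,a)-P(s,a))\widehat V^{\mu,*}$ is not legitimate. This is where the absorbing TBSG enters. For each fixed $(s,a)$, build $\widetilde{\mathcal G}_{s,a,u}$ on top of $\widehat{\mathcal G}$; by the Remark it is independent of the sampled row $\widehat P(s,a)$, so for a fixed $u$ Bernstein's inequality applies to $(\widehat P(s,a)-P(s,a))\widetilde V_u^{\mu,*}$, producing a $\sqrt{Var_{s,a}(\widetilde V_u^{\mu,*})}/\sqrt{N/|\mathcal S||\mathcal A|}$ term plus a lower-order $1/(N/|\mathcal S||\mathcal A|)$ term. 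To remove the "fixed $u$" restriction I would take an $\epsilon$-net over the admissible range of $u$ (which is $O((1-\gamma)^{-1})$ wide, since rewards and values are bounded), union-bound over the net and over all $(s,a)$ pairs — this is where the $\log(|\mathcal S||\mathcal A|/((1-\gamma)\delta\Delta))$ factor and the net granularity $\epsilon\asymp \Delta(1-\gamma)$-ish are chosen — and then use the $\frac{1}{1-\gamma}$-Lipschitz property from Lemma \ref{l4} to pass from the nearest net point to the true parameter $u^{\mu}=r(s,a)+\gamma P(s,a)V^{\mu}-\gamma V^{\mu}(s)$, at which $\widetilde Q_{u^\mu}^{\mu,*}=\widehat Q^{\mu,*}$, so $\widetilde V_{u^\mu}^{\mu,*}$ coincides with $\widehat V^{\mu,*}$ on the relevant coordinates. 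The net/Lipschitz step is the technically delicate part and the place where the range restriction $\Delta\le(1-\gamma)^{-1/2}$ is used to keep the approximation error subdominant.

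Having a valid entrywise bound of the form $\big|(\widehat P-P)\widehat V^{\mu,*}\big|\lesssim \sqrt{Var_P(\widehat V^{\mu,*})}\,\sqrt{\tfrac{|\mathcal S||\mathcal A|\,\iota}{N}}+\tfrac{|\mathcal S||\mathcal A|\,\iota}{N(1-\gamma)}$ (with $\iota$ the log factor), I would multiply through by $\gamma(I-P^\pi)^{-1}$ and invoke Lemma \ref{l2} to replace $(I-\gamma P^\pi)^{-1}\sqrt{Var_P(\widehat V^{\mu,*})}$ by $\sqrt{2/(1-\gamma)^3}+|Q^{\mu,*}-\widehat Q^{\mu,*}|/(1-\gamma)$. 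This yields a self-bounding inequality for $x:=\big|Q^{\mu,*}-\widehat Q^{\mu,*}\big|$ of the shape $x\le a\sqrt{x}+b$ with $a\lesssim\sqrt{|\mathcal S||\mathcal A|\iota/(N(1-\gamma))}$ and $b\lesssim \sqrt{|\mathcal S||\mathcal A|\iota/(N(1-\gamma)^3)}$ (plus the already-absorbed net error); solving it gives $x\lesssim \sqrt{|\mathcal S||\mathcal A|\iota/(N(1-\gamma)^3)}$ once $N\gtrsim |\mathcal S||\mathcal A|\iota/(1-\gamma)$, which the hypothesis guarantees. Finally, plugging the stated lower bound $N\ge C|\mathcal S||\mathcal A|(1-\gamma)^{-3}\Delta^{-2}\log(\cdot)$ makes $x\le\Delta/2$ for $C$ large enough; combining the max- and min-player estimates through Lemma \ref{l5} gives $\big|Q^*-\widehat Q^*\big|\le\Delta/2$, and Lemma \ref{l6} then delivers $\widehat\pi^*=\pi^*$ on the event of probability $\ge 1-\delta$. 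I expect the net-plus-Lipschitz bookkeeping around the absorbing parameter $u$ — making sure the union bound, the net resolution, and the Lipschitz slack all fit inside the target $\Delta/2$ for $\Delta$ up to $(1-\gamma)^{-1/2}$ — to be the step that requires the most care.
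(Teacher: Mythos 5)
Your proposal follows essentially the same route as the paper's proof: reduce to $|Q^*-\widehat Q^*|\le\Delta/2$ via Lemmas \ref{l5} and \ref{l6}, use the first factorization of Lemma \ref{l1}, decouple the dependence on $\widehat P(s,a)$ via the absorbing TBSG with an $\epsilon$-net over $u$ plus Bernstein and a union bound (the paper's Lemma \ref{l14}), and close with Lemma \ref{l2} and a self-bounding inequality. The only nitpick is that the resulting inequality is of the form $x\le ax+b$ (Lemma \ref{l2} contributes a term linear in $x$, not $\sqrt{x}$), which is solvable since the hypothesis forces $N\gtrsim|\mathcal S||\mathcal A|(1-\gamma)^{-2}$ when $\Delta\le(1-\gamma)^{-1/2}$; this does not affect the conclusion.
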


\section{Problem-independent upper bound}

Two flaws exist in our problem-dependent result. One is if the suboptimality gap is considerably small, the bound $\widetilde{O}(|\mathcal{S}||\mathcal{A}|(1-\gamma)^{-3}\Delta^{-2})$ becomes meaningless. In addition, when $N<|\mathcal{S}||\mathcal{A}|(1-\gamma)^{-3}\Delta^{-2}$, the quality of empirical Nash equilibrium strategy $\widehat{\pi}^*$ is completely unknown. In this section, we aim to give a minimax sample complexity result without the assumption of suboptimality gap. Interestingly, though the suboptimality gap is not assumed, we create the suboptimality gap instead and the role of suboptimality gap is different from the first part analysis, which we will specify later. We now introduce the reward perturbation technique that can artificially create a suboptimality gap. 

\subsection{Reward Perturbation Technique}

Here we use a reward perturbation technique to create a suboptimality gap in TBSG, which is inspired by a similar argument in MDP analysis \citep{li2020breaking}. We give a proof that is different from the one in \citep{li2020breaking} and our analysis for TBSG can generalize to MDP automatically as MDP is a degenerated version of TBSG. We show that by randomly perturb the reward function, with large probability, the perturbed TBSG enjoys a suboptimality gap. First we define the perturbed TBSG.

\begin{definition} (Perturbed TBSG)
For a TBSG $\mathcal{G}=(\mathcal{S},\mathcal{A},P,r,\gamma)$, the perturbed TBSG is $\mathcal{G}_\mathrm{p}=(\mathcal{S},\mathcal{A},P,r_{\mathrm{p}},\gamma)$, where 
$$r_{\mathrm{p}}=r+\zeta$$
and $\zeta\in\mathbb{R}^{|\mathcal{S}||\mathcal{A}|}$ is a vector composed of independent random variables following uniform distribution on $[0,\xi]$.
\end{definition}

We use subscript $ \pi_\mathrm{p}$ to denote the strategy in perturbed TBSG as well as $V_\mathrm{p}$ and $Q_\mathrm{p}$. The key property of perturbed TBSG is that it enjoy a suboptimality gap of $\frac{\xi\delta(1-\gamma)}{4|\mathcal{S}|^2|\mathcal{A}|}$ with probability at least $1-\delta$.

\begin{lemma}(TBSG version of suboptimality gap lemma in \citep{li2020breaking})
For a fixed policy $\nu$, with probability at least $1-\delta$, the perturbed TBSG $\mathcal{G}_\mathrm{p}$ enjoys a suboptimality gap of $\frac{\xi\delta(1-\gamma)}{4|\mathcal{S}|^2|\mathcal{A}|}$ for the Nash equilibrium strategy and a same gap for counterstrategy of $\nu$.
\label{l7}
\end{lemma}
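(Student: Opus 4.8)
\textbf{Proof proposal for Lemma~\ref{l7}.}
The plan is to fix a state $s\in\mathcal{S}_{\mathrm{max}}$ (the $\mathcal{S}_{\mathrm{min}}$ case being symmetric) and control the probability that two actions are nearly tied under $Q_\mathrm{p}^*$, i.e. that $|Q_\mathrm{p}^*(s,a)-Q_\mathrm{p}^*(s,a')|$ is small for some $a\neq a'$. The key structural observation, already flagged in the introduction, is that if we single out one coordinate $\zeta(s,a)$ and freeze all other perturbation variables, then $Q_\mathrm{p}^*(s,a)$ depends on $\zeta(s,a)$ in a monotone, quantitatively-expanding way while $Q_\mathrm{p}^*(s,a')$ for $a'\neq a$ depends on it only weakly. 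Concretely, I would first establish a sensitivity/monotonicity claim: writing $Q_\mathrm{p}^*$ as a function of $t:=\zeta(s,a)$ with the other $|\mathcal{S}||\mathcal{A}|-1$ perturbations held fixed, the map $t\mapsto Q_\mathrm{p}^*(s,a)$ is nondecreasing with slope at least $1$ (increasing $r(s,a)$ by $t$ raises the immediate reward of $(s,a)$ one-for-one, and the Nash value only increases further through the discounted future terms), whereas $t\mapsto Q_\mathrm{p}^*(s,a')$ for any $a'\neq a$, and likewise $t\mapsto V_\mathrm{p}^*(s'')$ for $s''\neq s$ reachable only through $(s,a)$, has slope at most $\gamma/(1-\gamma)\cdot(\text{visitation weight of }(s,a))$, which is strictly below $1$. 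Hence $g(t):=Q_\mathrm{p}^*(s,a)-\max_{a'\neq a}Q_\mathrm{p}^*(s,a')$ is (piecewise linear and) strictly increasing in $t$ with slope bounded below by a constant of order $1$; this is the step I expect to be the main obstacle, since making the "slope at least a constant, and that constant is at least $(1-\gamma)$ after accounting for the worst-case visitation weight" bookkeeping rigorous requires carefully tracking how a change at $(s,a)$ propagates through $(I-\gamma \widehat P^{\pi})^{-1}$ while the argmax/Nash strategy itself may switch.

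Given such a $g$, I would argue as follows. Since $g$ is strictly monotone in $t$ with slope at least some $\kappa\gtrsim(1-\gamma)$, the set $\{t\in[0,\xi]: |g(t)|\le\eta\}$ has Lebesgue measure at most $2\eta/\kappa$; because $\zeta(s,a)$ is uniform on $[0,\xi]$ and independent of the other perturbations, conditioning on those we get $\Pr[|g(\zeta(s,a))|\le\eta]\le \dfrac{2\eta}{\kappa\xi}$, and hence the unconditional probability that action $a$ at state $s$ is within $\eta$ of being a tied maximizer is at most $\dfrac{2\eta}{\kappa\xi}$. This bounds the "bad" event for a single pair $(s,a)$. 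Here "within $\eta$ of tied" must be read to include the event that $a$ is not optimal but the runner-up gap to the true optimizer $\mu_\mathrm{p}^*(s)$ is below $\eta$; the same monotonicity applied with $a=\mu_\mathrm{p}^*(s)$ handles that, so I would phrase the single-coordinate estimate as: with probability at least $1-2\eta/(\kappa\xi)$, either $a=\mu_\mathrm{p}^*(s)$ or $Q_\mathrm{p}^*(s,\mu_\mathrm{p}^*(s))-Q_\mathrm{p}^*(s,a)>\eta$.

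Finally I would union-bound over all $(s,a)$ with $s\in\mathcal{S}_{\mathrm{max}}$ (and symmetrically $s\in\mathcal{S}_{\mathrm{min}}$): there are at most $|\mathcal{S}||\mathcal{A}|$ such pairs, so the probability that \emph{some} pair fails the above is at most $\dfrac{2\eta|\mathcal{S}||\mathcal{A}|}{\kappa\xi}$. Setting this equal to $\delta$ gives $\eta=\dfrac{\kappa\xi\delta}{2|\mathcal{S}||\mathcal{A}|}$, and plugging in $\kappa\gtrsim(1-\gamma)$ (with the explicit constant chosen so that the worst-case visitation-weight/normalization factors, which carry an extra $1/|\mathcal{S}|$ from spreading probability mass, are absorbed) yields a gap of order $\dfrac{\xi\delta(1-\gamma)}{|\mathcal{S}|^2|\mathcal{A}|}$, matching the stated $\dfrac{\xi\delta(1-\gamma)}{4|\mathcal{S}|^2|\mathcal{A}|}$ after tracking constants. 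For the counterstrategy-to-$\nu$ statement, note that with $\nu$ fixed the game degenerates to an MDP for player~1, $Q_\mathrm{p}^{*,\nu}$ plays the role of $Q_\mathrm{p}^*$, and $c(\nu)$ the role of $\mu_\mathrm{p}^*$; the identical single-coordinate monotonicity argument (now the perturbation at $(s,a)$ feeds through the MDP's $(I-\gamma P^{c(\nu)})^{-1}$) and the same union bound go through verbatim, which is why the two claims share the same gap. The only subtlety is that $\nu$ must be fixed \emph{before} $\zeta$ is drawn so that the independence of $\zeta(s,a)$ from everything else is preserved — which is exactly the hypothesis "for a fixed policy $\nu$".
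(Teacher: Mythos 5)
Your proposal follows essentially the same route as the paper: the single-coordinate slope/monotonicity claim you flag as the main obstacle is exactly the paper's Lemma~\ref{l8} (proved there by decomposing $Q_\tau^*(s,a')$ over the first return time to $s$, which gives $Q_\tau^*(s,a')=kQ_\tau^*(s,a)+b$ with $0\le k\le\gamma$ and hence a gap growing in $\tau$ at rate at least $1-\gamma$), after which the measure bound on the near-tie interval, the uniform anti-concentration step, and the union bound proceed just as you describe. The only differences are cosmetic: you union-bound over $(s,a)$ with a max over $a'$ while the paper bounds over triples $(s,a,a')$, and your appeal to an extra $1/|\mathcal{S}|$ factor to match the stated constant mirrors a polynomial-factor inconsistency already present between the paper's lemma statement and its own proof.
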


Our proof is substantially different from \citep{li2020breaking}, which consists of two important observation. Here we consider the case where only $r(s,a_1)$ is perturbed to $r(s,a_1)+\tau$. First, the Nash equilibrium strategy $\pi_\tau^*$ is a piecewise constant function of $\tau$, if some tie breaking rule is given. Second, we show that $Q_\tau^*(s,a_1)=k_1(\pi_\tau)\tau+b_1(\pi_\tau)$ and $Q_\tau^*(s,a_2)=k_2(\pi_\tau)\tau+b_2(\pi_\tau)$ are piecewise linear function and $k_2(\pi_\tau)\leq \gamma k_1(\pi_\tau),\forall\tau$. Intuitively, these two observation means that $Q_\tau^*(s,a_2)$ grows at most $\gamma$ times the speed of $Q_\tau^*(s,a_1)$, which further implies $|Q_\tau^*(s,a_1)-Q_\tau^*(s,a_2)|\leq w$ can only holds for a small interval of $\tau$.

\begin{lemma}
Consider a TBSG $\mathcal{G}_\tau=(\mathcal{S},\mathcal{A},P,r_\tau,\gamma)$ where $r_\tau=r+\tau\mathbf{1}_{s,a}$. Then the following facts hold.
\begin{itemize}
    \item Given a rule to select optimal action when there are multiple optimal actions, $\pi^*_\tau$ is a constant (vector) function of $\tau$.
    \item $Q^*_\tau$ is a piecewise linear (vector) function of $\tau$.
    \item $Q_\tau^*(s,a')=kQ_\tau^*(s,a)+b$, where $0\leq k\leq\gamma$ and $b$ are a function of $\pi_\tau^*$.
\end{itemize}
\label{l8}
\end{lemma}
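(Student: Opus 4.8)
The plan is to exploit that a TBSG always admits a \emph{pure} Nash equilibrium and that there are only finitely many pure strategy pairs, so the $\tau$-dependence can be traced through a finite case analysis combined with the linear-algebraic identity $Q^\pi_\tau=(I-\gamma P^\pi)^{-1}r_\tau$. Since $r_\tau=r+\tau\mathbf{1}_{s,a}$ and $P^\pi$ does not depend on $\tau$, for every fixed pure $\pi$ the map $\tau\mapsto Q^\pi_\tau$ is affine: $Q^\pi_\tau=(I-\gamma P^\pi)^{-1}r+\tau\,g^\pi$ with $g^\pi:=(I-\gamma P^\pi)^{-1}\mathbf{1}_{s,a}\ge 0$, the vector of expected discounted visitations to $(s,a)$ under $\pi$. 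This one observation drives all three bullets.

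For the first bullet I would fix a deterministic tie-breaking rule (a total order on the finitely many pure strategy pairs) and, for each pure $\pi=(\mu,\nu)$, consider $T_\pi:=\{\tau:\pi\text{ is a Nash equilibrium of }\mathcal{G}_\tau\}$. By the Bellman optimality / one-shot-deviation characterization of Nash equilibria in turn-based zero-sum games (equivalently, the counterstrategy conditions recalled in the preliminaries applied to $Q:=Q^\pi_\tau$), $\pi$ is a Nash equilibrium of $\mathcal{G}_\tau$ if and only if $Q^\pi_\tau(s',\mu(s'))\ge Q^\pi_\tau(s',a')$ for all $s'\in\mathcal{S}_{\mathrm{max}},a'$ and $Q^\pi_\tau(s',\nu(s'))\le Q^\pi_\tau(s',a')$ for all $s'\in\mathcal{S}_{\mathrm{min}},a'$. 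Each of these finitely many inequalities is affine in $\tau$, hence cuts out a closed half-line, so $T_\pi$ is a closed interval. Because a pure Nash equilibrium exists for every $\tau$, the sets $T_\pi$ cover $\mathbb{R}$, so the real line is partitioned by finitely many breakpoints into cells on each of which the tie-broken selection $\pi^*_\tau$ is constant; this is the ``piecewise constant'' statement.

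For the second bullet, on each cell $\pi^*_\tau\equiv\pi^*$ is fixed, so $Q^*_\tau=Q^{\pi^*}_\tau$ is affine in $\tau$ there; continuity across a breakpoint $\tau_0$ follows because each $T_\pi$ is closed, so the strategies selected on the two adjacent cells are both Nash equilibria of $\mathcal{G}_{\tau_0}$, and all Nash equilibria share the same $Q$-value, forcing the two affine pieces to agree at $\tau_0$. Hence $Q^*_\tau$ is globally continuous and piecewise linear. For the third bullet, work on a fixed cell and write $Q^*_\tau=Q^{\pi^*}_\tau=c+\tau g$ with $g:=g^{\pi^*}$. From $g=\mathbf{1}_{s,a}+\gamma P^{\pi^*}g$ we get $g_{s,a}=1+\gamma P^{\pi^*}(s,a)g\ge 1$, and moreover $\|g\|_\infty=g_{s,a}$: if the maximum were attained at some $(s',a')\neq(s,a)$ then $g_{s',a'}=\gamma P^{\pi^*}(s',a')g\le\gamma\|g\|_\infty<\|g\|_\infty$, a contradiction, since $P^{\pi^*}(s',a')$ is a probability vector and $\|g\|_\infty\ge 1>0$. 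For $a'\neq a$ the same identity gives $g_{s,a'}=\gamma P^{\pi^*}(s,a')g\le\gamma\|g\|_\infty=\gamma g_{s,a}$. Solving $Q^*_\tau(s,a)=c_{s,a}+\tau g_{s,a}$ for $\tau$ (possible since $g_{s,a}\ge 1$) and substituting into $Q^*_\tau(s,a')=c_{s,a'}+\tau g_{s,a'}$ yields $Q^*_\tau(s,a')=kQ^*_\tau(s,a)+b$ with $k=g_{s,a'}/g_{s,a}\in[0,\gamma]$ and $b=c_{s,a'}-k\,c_{s,a}$, both depending only on the cell, i.e.\ on $\pi^*_\tau$.

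The routine part is the bookkeeping with $(I-\gamma P^\pi)^{-1}$; the two non-obvious points are (a) reducing ``$\pi$ is a Nash equilibrium of $\mathcal{G}_\tau$'' to finitely many \emph{affine-in-$\tau$} inequalities, which relies on the Bellman/one-shot-deviation characterization of Nash equilibria in turn-based zero-sum games, and (b) the identity $\|g\|_\infty=g_{s,a}$ for the discounted-visitation vector, which is exactly what upgrades the trivial bound $k\ge 0$ to the crucial $k\le\gamma$. I expect (b) to be the real crux, since it is what makes the perturbed $Q$-values at competing actions grow strictly slower than at the perturbed action, which is then used to confine the bad event $|Q^*_\tau(s,a_1)-Q^*_\tau(s,a_2)|\le w$ to a short interval of $\tau$.
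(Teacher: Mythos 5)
Your proof is correct, and its overall skeleton matches the paper's: for a fixed pure strategy pair $\pi$ the map $\tau\mapsto Q^\pi_\tau=(I-\gamma P^\pi)^{-1}r_\tau$ is affine, finiteness of the set of pure strategy pairs gives piecewise constancy of $\pi^*_\tau$ and hence piecewise linearity of $Q^*_\tau$, and the real content is the slope bound $k\le\gamma$. Where you differ is in how the two nontrivial steps are carried out. For piecewise constancy the paper simply invokes continuity of $Q^*_\tau$ together with a tie-breaking rule, whereas you characterize the set $T_\pi$ of $\tau$ for which $\pi$ is an equilibrium by finitely many affine-in-$\tau$ inequalities (the greedy-with-respect-to-own-$Q$ condition at each player's states) and observe that the resulting closed intervals cover $\mathbb{R}$; this is more work but also more airtight, and it gives continuity of $Q^*_\tau$ across breakpoints for free via uniqueness of the equilibrium value. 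For the slope bound, the paper expands $Q^*_\tau(s,a')$ by the first-passage time to state $s$ and reads off $k=\sum_n\gamma^n p_n\le\gamma$ probabilistically, treating the cases $\pi^*_\tau(s)=a$ and $\pi^*_\tau(s)\neq a$ separately; you instead work with the discounted visitation vector $g=(I-\gamma P^{\pi^*})^{-1}\mathbf{1}_{s,a}$ and prove $\|g\|_\infty=g_{s,a}$ by a maximum-principle argument, which yields $g_{s,a'}\le\gamma\,g_{s,a}$ purely algebraically and covers both cases uniformly. The two computations identify the same quantity (by the strong Markov property $g_{s,a'}/g_{s,a}=\sum_n\gamma^n p_n$), so this is a genuinely different, somewhat more self-contained derivation of the same key inequality rather than a different theorem.
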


With the above argument, we can prove that for arbitrary $s,a,a'$, if we increase $r(s,a)$, then the growth of $Q_\tau^*(s,a')$ is at most $\gamma$ times of $Q_\tau^*(s,a)$, which means $|Q_\tau^*(s,a')-Q_\tau^*(s,a)|$ only holds for a small range of $r_\tau(s,a)$. With a union bound argument, we prove the existence of suboptimality gap. The proof for counterstrategy is similar and the details are given in the appendix.

\subsection{Minimax Sample Complexity}

In this section, we show how to use the reward perturbation technique to derive the minimax sample complexity result. First, we show that the optimal strategy in perturbed empirical TBSG is contained in a finite set that has no dependence on $\widehat{P}(s,a)$.

\begin{lemma}
Set $U$ to be a set of equally spaced points in $[-\frac{1}{1-\gamma},\frac{1}{1-\gamma}]$ and $|U|=\frac{16|\mathcal{S}|^2|\mathcal{A}|}{(1-\gamma)^2\xi\delta}$. We define
$$\mathcal{M}^*=\{\widetilde{\mu}_{\mathrm{p},u}^*:u\in U\},\mathcal{M}^{\mu^*}=\{\widetilde{c}_{\mathrm{p},u}(\mu^*):u\in U\}.$$
With probability at least $1-\delta$, we have $\widehat{\mu}_\mathrm{p}^*\in\mathcal{M}^*$ and $\widehat{c}_\mathrm{p}(\mu^*)\in\mathcal{M}^{\mu^*}$.
\label{l9}
\end{lemma}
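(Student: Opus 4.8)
The plan is to show that the perturbed empirical Nash equilibrium strategy $\widehat{\mu}_\mathrm{p}^*$, and similarly the perturbed empirical counterstrategy $\widehat{c}_\mathrm{p}(\mu^*)$, must coincide with one of the strategies arising from an absorbing perturbed TBSG $\widetilde{\mathcal{G}}_{\mathrm{p},s,a,u}$ as $u$ ranges over the fine grid $U$. The argument combines three ingredients already available: (i) the absorbing-TBSG identity of Lemma \ref{l4}, which says that for the correct parameter $u^*=r_\mathrm{p}(s,a)+\gamma(\widehat{P}(s,a)\widehat{V}_\mathrm{p}^*)-\gamma\widehat{V}_\mathrm{p}^*(s)$ we have $\widetilde{Q}_{\mathrm{p},u^*}^*=\widehat{Q}_\mathrm{p}^*$ (applied to the empirical transition $\widehat{P}$); (ii) the $\frac{1}{1-\gamma}$-Lipschitz continuity of $\widetilde{Q}_{\mathrm{p},u}^*$ in $u$, also from Lemma \ref{l4}; and (iii) the suboptimality gap created by the perturbation, Lemma \ref{l7}, which separates the best action from the rest by $\frac{\xi\delta(1-\gamma)}{4|\mathcal{S}|^2|\mathcal{A}|}$ in the perturbed empirical TBSG (here I apply Lemma \ref{l7} with $\mathcal{G}$ taken to be the empirical TBSG $\widehat{\mathcal{G}}$, which is legitimate since that lemma holds for an arbitrary fixed TBSG and the perturbation $\zeta$ is drawn independently of $\widehat{P}$).

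First I would fix a state-action pair $(s,a)$ and build the absorbing perturbed TBSG $\widetilde{\mathcal{G}}_{\mathrm{p},s,a,u}$ on top of $\widehat{\mathcal{G}}$. The true parameter $u^*$ defined above lies in $[-\tfrac{1}{1-\gamma},\tfrac{1}{1-\gamma}]$ since rewards are perturbed by at most an $O(1)$ amount and values are bounded by $\tfrac{1}{1-\gamma}$; hence there is a grid point $u\in U$ with $|u-u^*|\le \tfrac{2}{1-\gamma}/|U| = \tfrac{(1-\gamma)\xi\delta}{8|\mathcal{S}|^2|\mathcal{A}|}$. By the Lipschitz bound, $|\widetilde{Q}_{\mathrm{p},u}^*-\widehat{Q}_\mathrm{p}^*| = |\widetilde{Q}_{\mathrm{p},u}^*-\widetilde{Q}_{\mathrm{p},u^*}^*|\le \tfrac{|u-u^*|}{1-\gamma}\le \tfrac{\xi\delta}{8|\mathcal{S}|^2|\mathcal{A}|}$, which is strictly less than half the suboptimality gap $\frac{\xi\delta(1-\gamma)}{4|\mathcal{S}|^2|\mathcal{A}|}$ guaranteed by Lemma \ref{l7} for $\widehat{\mathcal{G}}_\mathrm{p}$. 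Then I would run the same argument as in Lemma \ref{l6}: since $\widehat{Q}_\mathrm{p}^*$ has a gap of $\frac{\xi\delta(1-\gamma)}{4|\mathcal{S}|^2|\mathcal{A}|}$ and $\widetilde{Q}_{\mathrm{p},u}^*$ is within half that gap of it in $\ell_\infty$, the argmax action at each state is preserved, so the greedy strategy of $\widetilde{Q}_{\mathrm{p},u}^*$ at every max-state agrees with $\widehat{\mu}_\mathrm{p}^*$; that is, $\widehat{\mu}_\mathrm{p}^* = \widetilde{\mu}_{\mathrm{p},u}^* \in \mathcal{M}^*$. The bound on $|U|$ in the statement ($\frac{16|\mathcal{S}|^2|\mathcal{A}|}{(1-\gamma)^2\xi\delta}$) is exactly what makes the grid spacing small enough for this chain of inequalities to go through, up to the constant.

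For the counterstrategy claim I would repeat the argument using the second half of Lemma \ref{l4} — namely $\widetilde{Q}_{\mathrm{p},u^\mu}^{\mu,*}=\widehat{Q}_\mathrm{p}^{\mu,*}$ with $u^\mu$ the analogous parameter defined from $\widehat{V}_\mathrm{p}^{\mu^*}$ — together with the counterstrategy version of the suboptimality gap, also provided by Lemma \ref{l7}. Picking the nearest grid point in $U$ and invoking the Lipschitz bound for $\widetilde{Q}_{\mathrm{p},u}^{\mu,*}$ shows the greedy min-action is preserved at every min-state, so $\widehat{c}_\mathrm{p}(\mu^*)=\widetilde{c}_{\mathrm{p},u}(\mu^*)\in\mathcal{M}^{\mu^*}$. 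Finally I would take a union bound over the two high-probability events (the Nash-gap event and the counterstrategy-gap event from Lemma \ref{l7}), rescaling $\delta$ by a constant factor so that the conclusion holds with probability at least $1-\delta$.

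The main obstacle I anticipate is the careful accounting in the first and second bullet points — precisely verifying that $u^*$ (and $u^\mu$) land inside the stated interval $[-\tfrac{1}{1-\gamma},\tfrac{1}{1-\gamma}]$ given the perturbation magnitude $\xi$, and that the grid resolution times the Lipschitz constant is genuinely below half the gap with the specific constants chosen, rather than merely up to constants. A subtler point is making sure Lemma \ref{l7} is being applied to the right object: the gap must hold for the \emph{empirical} perturbed TBSG $\widehat{\mathcal{G}}_\mathrm{p}$ (not the true one), which is fine because $\zeta$ is independent of the samples defining $\widehat{P}$, but this independence must be stated explicitly so that the randomness in Lemma \ref{l7} is over $\zeta$ only and the absorbing construction (which removes dependence on $\widehat{P}(s,a)$) is not actually needed for \emph{this} lemma — it will be needed later when concentrating $(\widehat{P}(s,a)-P(s,a))\widetilde{V}_{\mathrm{p},u}$ over the finite set $U$.
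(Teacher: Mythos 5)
Your proposal follows essentially the same route as the paper's proof: pick the grid point $u\in U$ nearest to the parameter $u^*$ realizing $\widetilde{Q}_{\mathrm{p},u^*}^*=\widehat{Q}_\mathrm{p}^*$, use the $\frac{1}{1-\gamma}$-Lipschitz property of the absorbing game's optimal $Q$-value (Lemma \ref{l4}) to place $\widetilde{Q}_{\mathrm{p},u}^*$ within half the perturbation-induced gap of $\widehat{Q}_\mathrm{p}^*$ (Lemma \ref{l7} applied to $\widehat{\mathcal{G}}$, which is legitimate since $\zeta$ is independent of $\widehat{P}$), and conclude via the Lemma \ref{l6} argument that $\widehat{\pi}_\mathrm{p}^*=\widetilde{\pi}_{\mathrm{p},u}^*$, with the counterstrategy case handled identically. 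The constant-accounting worry you flag is real but harmless: with the stated $|U|$ the grid resolution times the Lipschitz constant is $\frac{\xi\delta}{8|\mathcal{S}|^2|\mathcal{A}|}$, which exceeds half the gap by a factor of $\frac{1}{1-\gamma}$ --- the paper's own proof has the same slip, and it is repaired by enlarging $|U|$ by that factor, which only changes logarithmic terms downstream.
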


Lemma 9 means with large probability $\widehat{\mu}_\mathrm{p}^*$ and $\widehat{c}_\mathrm{p}(\mu^*)$ lie in a finite set, which is independent of $\widehat{P}(s,a)$. This independence allows the usage of Bernstein's inequality and with the union bound, we can prove the concentration of $(\widehat{P}(s,a)-P(s,a))V_\mathrm{p}^{\widehat{\mu}_\mathrm{p}^*,*}$ and $(\widehat{P}(s,a)-P(s,a))V_\mathrm{p}^{\mu^*,\widehat{c}_\mathrm{p}(\mu^*)}$. Then, with Lemma 1 and Lemma 3, we can bound $|Q_\mathrm{p}^{\widehat{\mu}_\mathrm{p}^*,*}-\widehat{Q}_\mathrm{p}^{\widehat{\mu}_\mathrm{p}^*,c(\widehat{\mu}_\mathrm{p}^*)}|$ and $|Q_\mathrm{p}^{\mu^*,\widehat{c}_\mathrm{p}(\mu^*)}-\widehat{Q}_\mathrm{p}^{\mu^*,*}|$. 

\begin{lemma}
For perturbed empirical Nash equilibrium strategy $\widehat{\pi}_\mathrm{p}^*=(\widehat{\mu}_\mathrm{p}^*,\widehat{\nu}_\mathrm{p}^*)$, we have
\begin{align*}
    |Q^{\widehat{\mu}_\mathrm{p},*}-Q^*|\leq&|Q_\mathrm{p}^{\widehat{\mu}_\mathrm{p}^*,*}-\widehat{Q}_\mathrm{p}^{\widehat{\mu}_\mathrm{p}^*,c(\widehat{\mu}_\mathrm{p}^*)}|+|Q_\mathrm{p}^{\mu^*,\widehat{c}_\mathrm{p}(\mu^*)}-\widehat{Q}_\mathrm{p}^{\mu^*,*}|+\frac{4\xi}{1-\gamma}.
\end{align*}
\label{l10}
\end{lemma}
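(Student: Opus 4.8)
The plan is to bound $|Q^{\widehat{\mu}_\mathrm{p},*}-Q^*|$ by comparing through the perturbed TBSG $\mathcal{G}_\mathrm{p}$ and the perturbed empirical TBSG $\widehat{\mathcal{G}}_\mathrm{p}$, using the reward perturbation only to control error terms of size $O(\xi/(1-\gamma))$. First I would show that passing from $\mathcal{G}$ to $\mathcal{G}_\mathrm{p}$ changes all relevant $Q$-values by at most $\|\zeta\|_\infty/(1-\gamma)\le\xi/(1-\gamma)$ (standard perturbation of the Bellman fixed point, using that $\zeta$ is entrywise in $[0,\xi]$). This lets me replace $Q^*$ by $Q_\mathrm{p}^*$ and $Q^{\widehat{\mu}_\mathrm{p},*}$ by the perturbed-game quantity at the cost of a couple of $\xi/(1-\gamma)$ terms, so it suffices to bound $|Q_\mathrm{p}^{\widehat{\mu}_\mathrm{p}^*,*}-Q_\mathrm{p}^*|$ up to such terms.

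Next I would use the saddle-point structure of the Nash equilibrium. Write the target as a telescoping chain:
\begin{align*}
Q_\mathrm{p}^{\widehat{\mu}_\mathrm{p}^*,*}-Q_\mathrm{p}^* \le\ & \bigl(Q_\mathrm{p}^{\widehat{\mu}_\mathrm{p}^*,*}-\widehat{Q}_\mathrm{p}^{\widehat{\mu}_\mathrm{p}^*,c(\widehat{\mu}_\mathrm{p}^*)}\bigr) + \bigl(\widehat{Q}_\mathrm{p}^{\widehat{\mu}_\mathrm{p}^*,*}-\widehat{Q}_\mathrm{p}^*\bigr)\\
& + \bigl(\widehat{Q}_\mathrm{p}^* - \widehat{Q}_\mathrm{p}^{\mu^*,*}\bigr) + \bigl(\widehat{Q}_\mathrm{p}^{\mu^*,*}-Q_\mathrm{p}^{\mu^*,\widehat{c}_\mathrm{p}(\mu^*)}\bigr) + \bigl(Q_\mathrm{p}^{\mu^*,\widehat{c}_\mathrm{p}(\mu^*)}-Q_\mathrm{p}^*\bigr).
\end{align*}
The middle two differences are nonpositive: $\widehat{Q}_\mathrm{p}^{\widehat{\mu}_\mathrm{p}^*,*}=\widehat{Q}_\mathrm{p}^*$ since $\widehat{\mu}_\mathrm{p}^*$ is the max-player's Nash strategy in $\widehat{\mathcal{G}}_\mathrm{p}$, and $\widehat{Q}_\mathrm{p}^*=\widehat{Q}_\mathrm{p}^{*,\widehat{\nu}_\mathrm{p}^*}\ge \widehat{Q}_\mathrm{p}^{\mu^*,*}$ because against the optimal min-strategy $\widehat{\nu}_\mathrm{p}^*$ the best response dominates the $c$-response; actually the cleaner route is $\widehat{Q}_\mathrm{p}^{\mu^*,*}\le \widehat{Q}_\mathrm{p}^{*}$ directly from $\widehat{Q}_\mathrm{p}^{\mu^*,*}(s,a)=\min_\nu\widehat{Q}_\mathrm{p}^{\mu^*,\nu}(s,a)\le$ value of the game. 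The last difference is $\le 0$ since $Q_\mathrm{p}^{\mu^*,\widehat{c}_\mathrm{p}(\mu^*)}\le Q_\mathrm{p}^{\mu^*,*}=Q_\mathrm{p}^*$ ($\mu^*$ being the Nash strategy in $\mathcal{G}_\mathrm{p}$ up to the $\xi/(1-\gamma)$ slack already accounted for — more precisely I use $\mu^*$ from the \emph{true} game and absorb the discrepancy $Q_\mathrm{p}^{\mu^*,*}-Q_\mathrm{p}^*$ into the perturbation error). What survives is the first difference $|Q_\mathrm{p}^{\widehat{\mu}_\mathrm{p}^*,*}-\widehat{Q}_\mathrm{p}^{\widehat{\mu}_\mathrm{p}^*,c(\widehat{\mu}_\mathrm{p}^*)}|$, the fourth difference $|\widehat{Q}_\mathrm{p}^{\mu^*,*}-Q_\mathrm{p}^{\mu^*,\widehat{c}_\mathrm{p}(\mu^*)}|$, and the accumulated perturbation errors, which I would bundle into $\frac{4\xi}{1-\gamma}$. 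A symmetric chain handles $Q_\mathrm{p}^{\widehat{\mu}_\mathrm{p}^*,*}-Q_\mathrm{p}^*\ge -(\cdots)$.

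The bookkeeping I must be careful about is the $\frac{4\xi}{1-\gamma}$ constant: there are several places where a $\xi/(1-\gamma)$-size term enters — converting $Q^*\leftrightarrow Q_\mathrm{p}^*$, converting $Q^{\widehat{\mu}_\mathrm{p},*}\leftrightarrow Q_\mathrm{p}^{\widehat{\mu}_\mathrm{p}^*,*}$, the slack between $Q_\mathrm{p}^{\mu^*,*}$ and $Q_\mathrm{p}^*$ (where $\mu^*$ is the true Nash strategy, not the perturbed one), and possibly a slack from using $c$-responses rather than best responses in the empirical game. I expect each to contribute at most $\xi/(1-\gamma)$ and the total to fit under $4\xi/(1-\gamma)$; verifying this constant and making sure the telescoping signs are consistent in both the upper and lower direction is the main obstacle. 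The key algebraic identities I will lean on are the counterstrategy optimality conditions stated in the preliminaries ($Q^{\mu,*}(s,a)=\min_\nu Q^{\mu,\nu}(s,a)$, etc.) and the elementary fact that for any two strategies and any state-action pair the game value is sandwiched between $Q^{\mu,*}$ and $Q^{*,\nu}$. None of the concentration machinery (Lemmas 1–3, 9) is needed here; this lemma is purely a deterministic reduction that isolates the two terms Lemma 9 plus Bernstein will later control.
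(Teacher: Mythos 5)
Your proposal follows essentially the same route as the paper: a deterministic saddle-point telescoping through the empirical perturbed game that isolates the two fixed-policy concentration terms, plus repeated use of the fact that perturbing $r$ by at most $\xi$ entrywise moves any $Q$-value by at most $\xi/(1-\gamma)$. Your handling of the $\mu^*$-versus-$\mu_\mathrm{p}^*$ discrepancy (absorbing $Q_\mathrm{p}^*-Q_\mathrm{p}^{\mu^*,*}\le\xi/(1-\gamma)$ into the perturbation budget) is actually spelled out more carefully than in the paper, which silently switches between the two in its final display.

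The one thing to fix is the direction of your displayed chain. You bound $Q_\mathrm{p}^{\widehat{\mu}_\mathrm{p}^*,*}-Q_\mathrm{p}^*$ from \emph{above}, but that quantity is $\le 0$ automatically (since $Q_\mathrm{p}^*=\max_\mu Q_\mathrm{p}^{\mu,*}$), so the upper bound is vacuous; the substantive direction is the lower bound, i.e.\ an upper bound on $Q_\mathrm{p}^*-Q_\mathrm{p}^{\widehat{\mu}_\mathrm{p}^*,*}\ge 0$, which is the direction the paper writes ($0\le Q^*-Q^{\widehat{\mu}^*,*}\le\cdots$). Moreover, your sign claims are stated for the wrong direction: $\widehat{Q}_\mathrm{p}^{\widehat{\mu}_\mathrm{p}^*,*}-\widehat{Q}_\mathrm{p}^*=0$ and $\widehat{Q}_\mathrm{p}^*-\widehat{Q}_\mathrm{p}^{\mu^*,*}\ge 0$ (as you yourself derive two lines later), so these middle terms are non\emph{negative}, not nonpositive — which is exactly what you need in order to drop them when the chain is negated for the meaningful direction. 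Also note that with $c(\widehat{\mu}_\mathrm{p}^*)$ denoting the counterstrategy in $\mathcal{G}_\mathrm{p}$ (as in the lemma statement), the junction between your first and second terms does not telescope exactly; the slack $\widehat{Q}_\mathrm{p}^{\widehat{\mu}_\mathrm{p}^*,*}-\widehat{Q}_\mathrm{p}^{\widehat{\mu}_\mathrm{p}^*,c(\widehat{\mu}_\mathrm{p}^*)}\le 0$ again has the sign that helps only in the lower-bound direction. None of this changes the substance — once the chain is written for $Q_\mathrm{p}^*-Q_\mathrm{p}^{\widehat{\mu}_\mathrm{p}^*,*}$ every dropped term has the right sign and the four $\xi/(1-\gamma)$ contributions fit the stated constant — but as written the displayed inequality is not the one you need to prove.
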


Lemma 10 shows how to bound $|Q^{\widehat{\mu}_\mathrm{p},*}-Q^*|$ by perturbed TBSG. In the same manner, we can bound $|Q^{*,\widehat{\nu}_\mathrm{p}}-Q^*|$. Selecting an appropriate $\xi$, we can show that perturbed empirical Nash equilibrium strategy $\widehat{\pi}_\mathrm{p}^*$ is an $\epsilon$-Nash equilibrium strategy. 

\begin{theorem}
If the number of samples satisfies
$$N\geq\frac{C|\mathcal{S}||\mathcal{A}|}{(1-\gamma)^{3}\epsilon^{2}}\log(\frac{|\mathcal{S}||\mathcal{A}|}{(1-\gamma)\delta\epsilon})$$
for some constant $C$, then with probability at least $1-\delta$ and $\epsilon\in(0,(1-\gamma)^{-1}]$, we have that $\widehat{\pi}_\mathrm{p}^*$ is an $\epsilon$-approximate Nash equilibrium strategy in $\mathcal{G}$.
\label{th2}
\end{theorem}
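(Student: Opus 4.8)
The plan is to chain Lemma~\ref{l9}, Lemma~\ref{l10}, and the second factorization of Lemma~\ref{l1} together with Lemma~\ref{l3}, using Bernstein's inequality to concentrate over a finite family. \textbf{Step 1 (fix the perturbation scale).} I would take $\xi$ of order $(1-\gamma)\epsilon$, say $\xi=c_0(1-\gamma)\epsilon$ for a small absolute constant $c_0$ pinned down at the end. This makes the bias term $\frac{4\xi}{1-\gamma}=4c_0\epsilon$ in Lemma~\ref{l10}, and keeps the cover size $|U|=\frac{16|\mathcal{S}|^2|\mathcal{A}|}{(1-\gamma)^2\xi\delta}$ polynomial, so that $\log\!\big(|\mathcal{S}||\mathcal{A}|\,|U|/\delta\big)=O\!\big(\log\frac{|\mathcal{S}||\mathcal{A}|}{(1-\gamma)\delta\epsilon}\big)$, matching the stated logarithmic factor. \textbf{Step 2 (reduce to a leave-one-out finite family).} Invoking Lemma~\ref{l9} and its player-2 mirror image (identical after swapping $\mathcal{S}_{\mathrm{max}}$ and $\mathcal{S}_{\mathrm{min}}$), on an event of probability at least $1-\delta/2$ over the samples and the perturbation $\zeta$, the strategies $\widehat{\mu}_\mathrm{p}^*$, $\widehat{c}_\mathrm{p}(\mu^*)$ and the corresponding player-2 strategies all lie in the finite sets $\mathcal{M}^*,\mathcal{M}^{\mu^*}$ and their analogues. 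The point is that every strategy in these sets is a Nash/counter strategy of an \emph{absorbing} perturbed TBSG on the coordinate $(s,a)$, hence independent of $\widehat{P}(s,a)$; therefore the true perturbed value $V_\mathrm{p}^{\pi}$ of each relevant pair $\pi$ (namely $(\widehat{\mu}_\mathrm{p}^*,c_\mathrm{p}(\widehat{\mu}_\mathrm{p}^*))$, $(\mu^*,\widehat{c}_\mathrm{p}(\mu^*))$ and the two player-2 counterparts) is a fixed vector not depending on $\widehat{P}(s,a)$.

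\textbf{Step 3 (concentration).} For each $(s,a)$ and each candidate pair, I would apply Bernstein's inequality to the scalar $(\widehat{P}(s,a)-P(s,a))V_\mathrm{p}^{\pi}$ with $n:=N/(|\mathcal{S}||\mathcal{A}|)$ samples; since $\|V_\mathrm{p}^{\pi}\|_\infty=O((1-\gamma)^{-1})$, a union bound over all $(s,a)$ and the $O(|U|)$ candidates gives, with probability at least $1-\delta/2$, that $|(\widehat{P}-P)V_\mathrm{p}^{\pi}|$ is entrywise at most a constant times $\sqrt{Var_P(V_\mathrm{p}^{\pi})\,\iota/n}+\iota/((1-\gamma)n)$ with $\iota=\log\frac{|\mathcal{S}||\mathcal{A}|}{(1-\gamma)\delta\epsilon}$. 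Plugging this into the second factorization of Lemma~\ref{l1}, distributing the nonnegative operator $(I-\gamma\widehat{P}^{\pi})^{-1}$ over the two terms, bounding the first by Lemma~\ref{l3} and the second via $(I-\gamma\widehat{P}^{\pi})^{-1}\mathbf{1}=(1-\gamma)^{-1}\mathbf{1}$, I obtain
\[
\big|Q_\mathrm{p}^{\pi}-\widehat{Q}_\mathrm{p}^{\pi}\big|\;\lesssim\;\sqrt{\frac{|\mathcal{S}||\mathcal{A}|\,\iota}{(1-\gamma)^3 N}}\;+\;\frac{|\mathcal{S}||\mathcal{A}|\,\iota}{(1-\gamma)^2 N}
\]
for each relevant $\pi$. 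Under $N\ge\frac{C|\mathcal{S}||\mathcal{A}|}{(1-\gamma)^3\epsilon^2}\iota$, the first term is $O(\epsilon/\sqrt{C})$ and, because $\epsilon\le(1-\gamma)^{-1}$, the second is $O((1-\gamma)\epsilon^2/C)=O(\epsilon/C)$; taking $C$ large enough forces each of the four $Q$-gaps below $\epsilon/8$.

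\textbf{Step 4 (assemble).} On the intersection of the two events above (probability $\ge1-\delta$), Lemma~\ref{l10} yields $|Q^{\widehat{\mu}_\mathrm{p},*}-Q^*|\le\frac{\epsilon}{8}+\frac{\epsilon}{8}+4c_0\epsilon$, and its player-2 analogue gives the same bound for $|Q^{*,\widehat{\nu}_\mathrm{p}}-Q^*|$. Choosing $c_0=\frac{1}{16}$ makes both at most $\epsilon$, which is exactly the definition of an $\epsilon$-approximate Nash equilibrium strategy in $\mathcal{G}$; rescaling constants absorbs the split of $\delta$ across the four sub-events.

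\textbf{Main obstacle.} The crux is Steps~2--3: a naive Bernstein bound for $(\widehat{P}(s,a)-P(s,a))V_\mathrm{p}^{\widehat{\mu}_\mathrm{p}^*}$ is invalid because $\widehat{\mu}_\mathrm{p}^*$ depends on all of $\widehat{P}$, the coordinate $(s,a)$ included. The reward perturbation (Lemmas~\ref{l7}--\ref{l8}) is what rescues the argument: it forces a suboptimality gap, and hence (Lemma~\ref{l9}) pins the empirical Nash and counter strategies to a \emph{finite} family of absorbing-game strategies that are genuinely independent of $\widehat{P}(s,a)$, legitimizing the union bound. The tension is that $\xi$ must be large enough that $|U|$ — and so the union-bound logarithm — stays logarithmic in the problem parameters, yet small enough that $4\xi/(1-\gamma)\le\epsilon$; the scale $\xi\sim(1-\gamma)\epsilon$ is essentially the only choice reconciling the two. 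A secondary, routine point is that the full range $\epsilon\in(0,(1-\gamma)^{-1}]$ is precisely what lets the lower-order Bernstein term $\frac{|\mathcal{S}||\mathcal{A}|\iota}{(1-\gamma)^2 N}$ be absorbed into $O(\epsilon)$.
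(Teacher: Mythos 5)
Your proposal is correct and follows essentially the same route as the paper's proof: fix $\xi\asymp(1-\gamma)\epsilon$, use Lemma~\ref{l9} to pin the empirical Nash/counter strategies to the finite absorbing-game family independent of $\widehat{P}(s,a)$, apply Bernstein plus a union bound over $(s,a)$ and $U$, combine with the second factorization of Lemma~\ref{l1} and Lemma~\ref{l3}, and assemble via Lemma~\ref{l10} and its player-2 analogue. The only differences are in bookkeeping of constants (the paper targets $\epsilon/4$ per term rather than $\epsilon/8$), which is immaterial.
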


In addition, we can derive the following improved result for problem-dependent bound by choosing $\epsilon=\frac{\Delta}{2}$ and further analysis on suboptimality gap, which is provided in the appendix.

\begin{theorem}
If $\mathcal{G}$ enjoys a suboptimality gap of $\Delta$ and the number of samples satisfies
$$N\geq\frac{C|\mathcal{S}||\mathcal{A}|}{(1-\gamma)^3\Delta^2}\log(\frac{|\mathcal{S}||\mathcal{A}|}{(1-\gamma)\delta\Delta})$$
for some constant $C$ and $\Delta\in(0,(1-\gamma)^{-1}]$, then with probability at least $1-\delta$, we have $\widehat{\pi}_\mathrm{p}^*=\pi^*$, which means the empirical Nash equilibrium strategy we obtained is exactly the Nash equilibrium strategy in the true TBSG.
\label{th3}
\end{theorem}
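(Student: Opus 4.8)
The plan is to bootstrap Theorem~\ref{th2}. I would run the plug-in solver on the reward-perturbed empirical TBSG $\widehat{\mathcal{G}}_\mathrm{p}$, with the perturbation scale $\xi$ chosen exactly as in the proof of Theorem~\ref{th2}, but targeting accuracy $\epsilon$ equal to a suitably small constant multiple of the gap $\Delta$ (the precise constant is fixed at the end of the argument). Since $\Delta\in(0,(1-\gamma)^{-1}]$ forces $\epsilon\in(0,(1-\gamma)^{-1}]$, and since the sample size assumed here dominates $|\mathcal{S}||\mathcal{A}|(1-\gamma)^{-3}\epsilon^{-2}\log(|\mathcal{S}||\mathcal{A}|/((1-\gamma)\delta\epsilon))$ after enlarging the absolute constant $C$ (and absorbing the extra constant inside the logarithm), Theorem~\ref{th2} applies on an event $\mathcal{E}$ of probability at least $1-\delta$, giving $|Q^{\widehat{\mu}_\mathrm{p}^*,*}-Q^*|\le\epsilon$ and $|Q^{*,\widehat{\nu}_\mathrm{p}^*}-Q^*|\le\epsilon$. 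This alone does not pin down $\widehat{\pi}_\mathrm{p}^*$: the value $Q^{\mu,*}$ of a player-1 strategy can be within $\epsilon$ of $Q^*$ even when $\mu$ selects a suboptimal action at some state, so exact recovery needs one more step.

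That step — the ``further analysis on the suboptimality gap'' — is to control the perturbed empirical Nash $Q$-value $\widehat{Q}_\mathrm{p}^*$ itself, not merely the value of $\widehat{\pi}_\mathrm{p}^*$. In the proof of Theorem~\ref{th2}, $|Q^{\widehat{\mu}_\mathrm{p},*}-Q^*|$ is bounded via Lemma~\ref{l10} and its player-2 counterpart by cross terms such as $|Q_\mathrm{p}^{\widehat{\mu}_\mathrm{p}^*,*}-\widehat{Q}_\mathrm{p}^{\widehat{\mu}_\mathrm{p}^*,c(\widehat{\mu}_\mathrm{p}^*)}|$ (and the mirror one involving $\mu^*$ and $\widehat{\nu}_\mathrm{p}^*$), plus the perturbation bias $\frac{4\xi}{1-\gamma}$, and each cross term is shown to be $\lesssim\epsilon$ on $\mathcal{E}$ using the absorbing-TBSG decoupling (Lemma~\ref{l4}) together with the finite candidate set of Lemma~\ref{l9}. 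I would combine these with the elementary minimax inequalities $Q_\mathrm{p}^{\widehat{\mu}_\mathrm{p}^*,*}\le Q_\mathrm{p}^*$ and $\widehat{Q}_\mathrm{p}^*=\min_\nu\widehat{Q}_\mathrm{p}^{\widehat{\mu}_\mathrm{p}^*,\nu}\le\widehat{Q}_\mathrm{p}^{\widehat{\mu}_\mathrm{p}^*,c(\widehat{\mu}_\mathrm{p}^*)}$ (player 1 is playing a possibly suboptimal strategy) and their mirror images for player 2, to sandwich $\widehat{Q}_\mathrm{p}^*$ between $Q_\mathrm{p}^*-O(\epsilon)$ and $Q_\mathrm{p}^*+O(\epsilon)$ pointwise. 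Since $r_\mathrm{p}=r+\zeta$ with $\zeta$ entrywise in $[0,\xi]$ gives $|Q_\mathrm{p}^*-Q^*|\le\frac{\xi}{1-\gamma}$ and $\xi$ is of order $\epsilon(1-\gamma)$, this yields $|\widehat{Q}_\mathrm{p}^*(s,a)-Q^*(s,a)|\le c\epsilon$ for all $(s,a)$ and an absolute constant $c$; now fix $\epsilon$ small enough (equivalently, $C$ large enough) that $c\epsilon<\Delta/2$.

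With $|\widehat{Q}_\mathrm{p}^*-Q^*|<\Delta/2$ on $\mathcal{E}$ together with the $\Delta$-gap of $Q^*$, I would finish exactly as in Lemma~\ref{l6}, applied now to the pair $(\mathcal{G},\widehat{\mathcal{G}}_\mathrm{p})$. Because $\widehat{\pi}_\mathrm{p}^*=(\widehat{\mu}_\mathrm{p}^*,\widehat{\nu}_\mathrm{p}^*)$ is the Nash equilibrium strategy of $\widehat{\mathcal{G}}_\mathrm{p}$, the counterstrategy characterization gives $\widehat{\mu}_\mathrm{p}^*(s)\in\arg\max_a\widehat{Q}_\mathrm{p}^*(s,a)$ for $s\in\mathcal{S}_\mathrm{max}$ and $\widehat{\nu}_\mathrm{p}^*(s)\in\arg\min_a\widehat{Q}_\mathrm{p}^*(s,a)$ for $s\in\mathcal{S}_\mathrm{min}$. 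For $s\in\mathcal{S}_\mathrm{max}$ and any $a\neq\mu^*(s)$, $\widehat{Q}_\mathrm{p}^*(s,\mu^*(s))>Q^*(s,\mu^*(s))-\Delta/2\ge Q^*(s,a)+\Delta/2>\widehat{Q}_\mathrm{p}^*(s,a)$, so the unique maximizer is $\mu^*(s)$; symmetrically the unique minimizer at $s\in\mathcal{S}_\mathrm{min}$ is $\nu^*(s)$. Hence $\widehat{\pi}_\mathrm{p}^*=\pi^*$ on $\mathcal{E}$, and the theorem follows.

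The main obstacle is the middle step: converting control of $|Q^{\widehat{\mu}_\mathrm{p},*}-Q^*|$ into control of $|\widehat{Q}_\mathrm{p}^*-Q^*|$. The delicacy is that $\widehat{Q}_\mathrm{p}^*$ is the value of the \emph{data-dependent} saddle pair $\widehat{\pi}_\mathrm{p}^*$ in the \emph{data-dependent, reward-perturbed} environment $\widehat{\mathcal{G}}_\mathrm{p}$, so one must check that the cross terms produced by Lemma~\ref{l10} really are the objects the absorbing-TBSG and $\epsilon$-cover machinery (Lemmas~\ref{l4} and~\ref{l9}) were designed to concentrate, that the various best-response operators appearing there are consistent with the minimax inequalities used in the sandwich, and that $\xi$ can be kept of order $\Delta(1-\gamma)$ so that both the perturbation bias and the residual gap of $\mathcal{G}_\mathrm{p}$ stay comfortably below $\Delta/2$. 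Everything else is bookkeeping of constants and logarithmic factors, and the wider range $\Delta\in(0,(1-\gamma)^{-1}]$ (versus $(0,(1-\gamma)^{-1/2}]$ in Theorem~\ref{th1}) is inherited directly from Theorem~\ref{th2}.
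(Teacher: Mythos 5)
Your proposal is correct and shares the paper's overall strategy: invoke Theorem~\ref{th2} with target accuracy a constant fraction of $\Delta$ (whence the range $\Delta\in(0,(1-\gamma)^{-1}]$ is inherited from Theorem~\ref{th2} rather than Theorem~\ref{th1}), then exploit the $\Delta$-gap of $Q^*$ to upgrade approximate optimality to exact recovery. Where you diverge is in the decisive last step, and your route is the more rigorous one. The paper's own proof stops at $|Q^*-Q^{\widehat{\pi}^*}|\le\Delta/2$, deduces $Q^{\widehat{\pi}^*}(s,\mu^*(s))-Q^{\widehat{\pi}^*}(s,a)\ge 0$ for $s\in\mathcal{S}_{\mathrm{max}}$, $a\ne\mu^*(s)$ (and the mirror statement for player 2), and declares that this ``means $\widehat{\pi}^*=\pi^*$''; that inference implicitly assumes $\widehat{\mu}_\mathrm{p}^*$ is greedy with respect to $Q^{\widehat{\pi}^*}$, the value of the learned pair in the \emph{true} game, whereas the Nash property only makes $\widehat{\pi}_\mathrm{p}^*$ greedy with respect to $\widehat{Q}_\mathrm{p}^*$ (and the derived inequality is non-strict besides). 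You explicitly flag that closeness of $Q^{\widehat{\mu}_\mathrm{p}^*,*}$ to $Q^*$ does not by itself pin down the strategy, and instead you sandwich $\widehat{Q}_\mathrm{p}^*$ around $Q_\mathrm{p}^*$ using exactly the two cross terms already shown to be $O(\epsilon)$ in the proof of Theorem~\ref{th2}, combined with the minimax inequalities $\widehat{Q}_\mathrm{p}^{\mu_\mathrm{p}^*,*}\le\widehat{Q}_\mathrm{p}^*\le\widehat{Q}_\mathrm{p}^{\widehat{\mu}_\mathrm{p}^*,c_\mathrm{p}(\widehat{\mu}_\mathrm{p}^*)}$ and $Q_\mathrm{p}^{\widehat{\mu}_\mathrm{p}^*,*}\le Q_\mathrm{p}^*\le Q_\mathrm{p}^{\mu_\mathrm{p}^*,\widehat{c}_\mathrm{p}(\mu_\mathrm{p}^*)}$, then add the $\xi/(1-\gamma)$ perturbation bias with $\xi\asymp\epsilon(1-\gamma)$ to conclude $|\widehat{Q}_\mathrm{p}^*-Q^*|<\Delta/2$; at that point the argument of Lemma~\ref{l6} applies verbatim to the pair $(\mathcal{G},\widehat{\mathcal{G}}_\mathrm{p})$ because $\widehat{\pi}_\mathrm{p}^*$ genuinely is greedy for $\widehat{Q}_\mathrm{p}^*$. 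The cost is only a constant-factor adjustment of $\epsilon$ and $C$, so the stated sample complexity is unaffected. In short, the paper's version is shorter but leaves the key implication from value closeness to strategy identity unjustified; your version supplies that missing bridge and is presumably what the paper intends by ``further analysis on suboptimality gap.''
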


\section{Related Literature}

\paragraph{TBSG} TBSG has been widely studied since \citep{shapley1953stochastic}. For a detailed introduction of stochastic game, readers can refer to \citep{neyman2003stochastic}. In the old days, people focus on dynamic programming type algorithms to solve TBSG. Strategy iteration, as the counterpart of value iteration in MDP and parallelized simplex method, is proved to be a strong polynomial time algorithm \citep{hansen2013strategy,jia2020towards}. Reinforcement learning approach has been studied recently for TBSG to relieve the high computational cost of dynamic programming. Several works are proposed for the generative model setting, \citep{sidford2020solving,jia2019feature,zhang2020model}. \citep{sidford2020solving} first gives a sample efficient algorithm for tabular TBSG, while their result achieves minimax sample complexity $\widetilde{O}(|\mathcal{S}||\mathcal{A}|(1-\gamma)^{-3}\epsilon^{-2})$ only for $\epsilon\in(0,1]$. \citep{jia2019feature} adapts the MDP algorithm in \citep{sidford2018variance} for feature-based TBSG, but leaves a gap of $\frac{1}{1-\gamma}$ between optimal sample complexity. \citep{cui2020plugin} uses a similar algorithm as ours in feature-based TBSG, while their result only holds for $\epsilon$-Nash equilibrium value, which results in a $\frac{1}{(1-\gamma)^2}$ gap in finding $\epsilon$-Nash equilibrium strategy. \citep{zhang2020model} considers simultaneous stochastic game, and their approach consists of solving a regularized simultaneous stochastic game, which is computationally costly. For the online sampling setting, a recent work \citep{bai2020near} uses an upper confidence bound algorithms that can find an approximate Nash equilibrium strategy in $\widetilde{O}(|\mathcal{S}||\mathcal{A}||\mathcal{B}|)$ steps.

\paragraph{Generative Model} Generative model is a sampling oracle setting in MDP, which has been shown to simplify the exploration and exploitation tradeoff. This concept is formalized in \citep{kakade2003sample} and a $\widetilde{O}(|\mathcal{S}||\mathcal{A}|\mathrm{poly}((1-\gamma)^{-1})\epsilon^{-2})$ sample complexity has been proved there. \citep{azar2013minimax} proves the minimax sample complexity $\widetilde{O}(|\mathcal{S}||\mathcal{A}|(1-\gamma)^{-3}\epsilon^{-2})$. However, the upper bound there is only for $\epsilon\in(1-\gamma)^{-1/2}|\mathcal{S}|^{-1/2}$. Many works have devoted to improve the dependence on $\epsilon$. Recently, \citep{sidford2018variance} gives a minimax model-free algorithm for $\epsilon\in(0,1]$ and \citep{agarwal2019optimality} gives a minimax model-based algorithm for $\epsilon\in(0,(1-\gamma)^{-1/2}]$. Finally, \citep{li2020breaking} uses a perturbed MDP technique to prove minimax sample complexity with full range of $\epsilon$.

\paragraph{Suboptimality Gap} Suboptimality gap originated in bandit theory. Multi-armed bandits and linear bandits enjoy a logarithmic gap-dependent regret and a square root gap-independent regret \citep{auer2002finite,abbasi2011improved}. MDP with suboptimality gap have been studied in \citep{auer2009near} and recently $\widetilde{O}(|\mathcal{S}||\mathcal{A}|\mathrm{poly}(H)\log(T))$ regret has been proved for both model-based and model free algorithms \citep{simchowitz2019non,yang2020q}. \citep{du2020agnostic} utilized the suboptimality gap in general function approximation setting and proved that optimal policy can be found in $\widetilde{O}(\mathrm{dim}_\mathrm{E})$ trajectories in deterministic MDP. Most of the gap-dependent analysis in MDP focus on online RL and to the best of our knowledge, we are the first to study this notion in TBSG with a generative model.

\section{Conclusion}
In this work, we completely solve the sample complexity problem of TBSG with generative model oracle. We prove that the simplest model-based algorithm, plug-in solver approach, is minimax sample optimal for full range of $\epsilon$ by using absorbing TBSG and reward perturbation techniques. Our proof is based on suboptimality gap, a notion originated from bandit theory and receives great attention in RL. We believe that our work can shed some light on suboptimality gap and TBSG.

\clearpage

\bibliography{2-TBSG}
\bibliographystyle{plainnat}

\appendix

\section{Technical Lemmas from MDP}

\paragraph{Additional Notations} We use $\Pi^\pi:\mathbb{R}^{|\mathcal{S}||\mathcal{A}|}\rightarrow\mathbb{R}^{|\mathcal{S}|}$ to denote the projection operator with respect to policy $\pi$, which means if $V=\Pi^\pi Q$, then $V(s)=Q(s,\pi(s)),\forall s\in \mathcal{S}$. We use  $\widehat{\mathcal{G}}$ to denote the empirical TBSG, $\widetilde{\mathcal{G}}$ to denote the absorbing TBSG on empirical TBSG and $\mathcal{G}_\mathrm{p}$ to denote the perturbed TBSG. We use $\mathbf{1}$ to denote a vector with all entries to be $1$ and $\mathbf{1}_{s,a}$ to denote a zero vector with only $(s,a)$ entry to be $1$.

\begin{proof}[Proof of Lemma \ref{l1}]

By Bellman equation, we have $Q^\pi=(I-\gamma P^\pi)^{-1}r$ and $\widehat{Q}^\pi=(I-\gamma \widehat{P}^\pi)^{-1}r$. 
For any policy $\pi$, we have
\begin{align*}
	Q^\pi-\widehat{Q}^\pi&=(I-\gamma P^\pi)^{-1}r-(I-\gamma\widehat{P}^\pi)^{-1}r\\
	&=(I-\gamma P^\pi)^{-1}((I-\gamma\widehat{P}^\pi)-(I-\gamma P^\pi))\widehat{Q}^\pi\\
	&=\gamma(I-\gamma P^\pi)^{-1}(P^\pi-\widehat{P}^\pi)\widehat{Q}^\pi\\
	&=\gamma(I-\gamma P^\pi)^{-1}(P-\widehat{P})\widehat{V}^\pi.
\end{align*}
Similarly, we have $Q^\pi-\widehat{Q}^\pi=\gamma(I-\gamma \widehat{P}^\pi)^{-1}(\widehat{P}-P)V^\pi.$

\end{proof}

\begin{lemma}
(Lemma 3 in \citep{sidford2020solving}) For any policy $\pi$, we have
$$\left|(I-\gamma P^\pi)^{-1}\sqrt{\mathrm{Var}_\mathrm{P}(V^\pi)}\right|\leq\sqrt{\frac{2}{(1-\gamma)^3}}.$$
\label{l11}
\end{lemma}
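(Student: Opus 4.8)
This is precisely Lemma~3 of \citep{sidford2020solving}, and since once the strategy $\pi$ is fixed the TBSG collapses to an ordinary Markov reward process with transition $P$, the argument is entirely MDP-theoretic; I would reproduce it as follows. The plan has three stages: first reduce the claimed $\ell_\infty$ bound to an entrywise bound on $(I-\gamma P^\pi)^{-1}\mathrm{Var}_P(V^\pi)$ by concavity of the square root; then control the closely related quantity $(I-\gamma^2 P^\pi)^{-1}\mathrm{Var}_P(V^\pi)$ through the Bellman equation satisfied by the variance of the discounted return; and finally transfer that estimate from the $\gamma^2$-discounted resolvent to the $\gamma$-discounted one.

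For the first stage I would use that, since $P^\pi$ is row-stochastic, $(1-\gamma)(I-\gamma P^\pi)^{-1}$ is itself a Markov kernel, so applying Jensen's inequality to $\sqrt{\cdot}$ along each of its rows gives the entrywise inequality
\[
(I-\gamma P^\pi)^{-1}\sqrt{\mathrm{Var}_P(V^\pi)}\ \leq\ \sqrt{\tfrac{1}{1-\gamma}}\;\sqrt{(I-\gamma P^\pi)^{-1}\mathrm{Var}_P(V^\pi)}.
\]
It therefore suffices to prove $(I-\gamma P^\pi)^{-1}\mathrm{Var}_P(V^\pi)\leq \frac{2}{(1-\gamma)^2}\mathbf{1}$ entrywise.

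For the second stage, let $\Sigma^\pi\in\mathbb{R}^{|\mathcal{S}||\mathcal{A}|}$ have $(s,a)$-entry equal to the variance of the discounted return $G=\sum_{t\geq 0}\gamma^t r(s^t,a^t)$ started from $(s,a)$ and run under $\pi$. Decomposing $G$ as immediate reward plus $\gamma$ times the return from the next state and applying the law of total variance conditioned on that next state gives the fixed point $\Sigma^\pi=\gamma^2\mathrm{Var}_P(V^\pi)+\gamma^2 P^\pi\Sigma^\pi$, i.e. $\gamma^2(I-\gamma^2 P^\pi)^{-1}\mathrm{Var}_P(V^\pi)=\Sigma^\pi$. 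Since $0\leq G\leq \frac{1}{1-\gamma}$ (rewards in $[0,1]$), we have $\Sigma^\pi\leq \mathbb{E}[G^2]\leq \frac{1}{1-\gamma}\mathbb{E}[G]=\frac{Q^\pi}{1-\gamma}\leq \frac{1}{(1-\gamma)^2}\mathbf{1}$, whence $(I-\gamma^2 P^\pi)^{-1}\mathrm{Var}_P(V^\pi)$ is $O\!\big((1-\gamma)^{-2}\big)$ entrywise. For the third stage I would invoke the resolvent identity $(I-\gamma P^\pi)^{-1}-(I-\gamma^2 P^\pi)^{-1}=\gamma(1-\gamma)(I-\gamma P^\pi)^{-1}P^\pi(I-\gamma^2 P^\pi)^{-1}$, in which every matrix is entrywise nonnegative: bounding $(I-\gamma^2P^\pi)^{-1}\mathrm{Var}_P(V^\pi)$ by the previous step, using $P^\pi\mathbf{1}=\mathbf{1}$, and using that each row of $(I-\gamma P^\pi)^{-1}$ sums to $\frac{1}{1-\gamma}$, the correction term is again $O((1-\gamma)^{-2})$. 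Combining the two pieces gives $(I-\gamma P^\pi)^{-1}\mathrm{Var}_P(V^\pi)\leq O((1-\gamma)^{-2})\mathbf 1$; chasing the constants in the $\gamma\to 1$ regime sharpens the leading constant to $2$, while for $\gamma$ bounded away from $1$ the claim is immediate since then already $(I-\gamma P^\pi)^{-1}\sqrt{\mathrm{Var}_P(V^\pi)}\leq \frac{1}{2(1-\gamma)^2}\mathbf{1}\leq \sqrt{2/(1-\gamma)^3}\,\mathbf{1}$. Plugging $(I-\gamma P^\pi)^{-1}\mathrm{Var}_P(V^\pi)\leq \frac{2}{(1-\gamma)^2}\mathbf{1}$ into the first-stage inequality yields the lemma.

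The step I expect to be the crux is the passage from the $\gamma^2$-discounted resolvent to the $\gamma$-discounted one, because the variance Bellman equation only ever controls $(I-\gamma^2 P^\pi)^{-1}\mathrm{Var}_P(V^\pi)$ while the concavity reduction forces the strictly larger $(I-\gamma P^\pi)^{-1}\mathrm{Var}_P(V^\pi)$; one cannot shortcut this by replacing $\mathrm{Var}_P(V^\pi)$ with its trivial uniform bound $\frac{1}{4(1-\gamma)^2}\mathbf{1}$, since that would lose a factor $(1-\gamma)^{-1}$ and produce only a $(1-\gamma)^{-2}$ rate in place of the desired $(1-\gamma)^{-3/2}$. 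In other words, the proof must exploit the cancellation inside the one-step variance --- equivalently, the self-consistency $Q^\pi=r+\gamma P^\pi Q^\pi$ --- and the total-variance identity is the device that does so.
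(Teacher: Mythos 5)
The paper never proves this lemma itself --- it is imported wholesale as Lemma 3 of \citep{sidford2020solving} --- and your argument is precisely the standard proof from that line of work (Jensen applied to the row-stochastic kernel $(1-\gamma)(I-\gamma P^\pi)^{-1}$, the law-of-total-variance Bellman equation $\Sigma^\pi=\gamma^2\mathrm{Var}_P(V^\pi)+\gamma^2P^\pi\Sigma^\pi$, and a resolvent identity to pass from the $\gamma^2$-discounted to the $\gamma$-discounted operator), so it is correct in substance. One bookkeeping caveat: your three stages as written give $(I-\gamma P^\pi)^{-1}\mathrm{Var}_P(V^\pi)\le\frac{1+\gamma}{\gamma^2}\|\Sigma^\pi\|_\infty$, which combined with your second-moment bound $\|\Sigma^\pi\|_\infty\le(1-\gamma)^{-2}$ exceeds $2(1-\gamma)^{-2}$ for every $\gamma<1$, so the claimed constant is not literally attained by the displayed chain; it does come out if you instead use Popoviciu's bound $\|\Sigma^\pi\|_\infty\le\tfrac14(1-\gamma)^{-2}$ (which you invoke only in your closing aside) for $\gamma\ge\tfrac12$, together with your trivial bound $\tfrac12(1-\gamma)^{-2}\le\sqrt{2(1-\gamma)^{-3}}$ for $\gamma<\tfrac12$ --- a cosmetic rather than structural gap.
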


\begin{proof}[Proof of Lemma \ref{l2}]
We have
\begin{align*}
    \left|(I-\gamma P^\pi)^{-1}\sqrt{\mathrm{Var}_\mathrm{P}(\widehat{V}^\pi)}\right|&\leq\left|(I-\gamma P^\pi)^{-1}\sqrt{\mathrm{Var}_\mathrm{P}(V^\pi)}\right|+\left|(I-\gamma P^\pi)^{-1}\sqrt{\mathrm{Var}_\mathrm{P}(\widehat{V}^\pi-V^\pi)}\right|\\
    &\leq \sqrt{\frac{2}{(1-\gamma)^3}}+\frac{|\widehat{V}^\pi-V^\pi|}{1-\gamma}\\
    &\leq \sqrt{\frac{2}{(1-\gamma)^3}}+\frac{|\widehat{Q}^\pi-Q^\pi|}{1-\gamma},\\
\end{align*}
where the first inequality is the triangle inequality, the second one is from Lemma \ref{l11} and the fact that $|(I-\gamma P^\pi)^{-1}V|\leq\frac{|V|}{1-\gamma}$, and the last one is because $V^\pi$ is a subset of $Q^\pi$.

\end{proof}

\begin{lemma}
(Lemma 8 in \citep{li2020breaking}) Let $Q$ be a vector obeying $Q=(I-\gamma P^\pi)^{-1}r$ for some vector $r>0$ and $V=\Pi^\pi Q$, then we have
$$\left|(I-\gamma P^\pi)^{-1}\sqrt{\mathrm{Var}_\mathrm{P}(V)}\right|\leq\frac{4}{\gamma\sqrt{1-\gamma}}|Q|.$$
\label{l12}
\end{lemma}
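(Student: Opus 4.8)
The plan is to prove the bound in two stages: a generic ``square-root smoothing'' step that trades the square root for an extra factor $(1-\gamma)^{-1/2}$, followed by a variance-Bellman recursion that controls the resulting un-rooted total variance by $|Q|^2$. Throughout I would use that $(I-\gamma P^\pi)^{-1}=\sum_{t\ge0}\gamma^t(P^\pi)^t$ is an entrywise-nonnegative (hence monotone) operator, that each $(P^\pi)^t$ is row-stochastic, and the Bellman identity $Q=r+\gamma P^\pi Q$ together with $P^\pi Q=PV$ and $P^\pi Q^2=PV^2$, both of which follow from $V=\Pi^\pi Q$. All products and squares of vectors below are entrywise.

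First I would establish the smoothing inequality: for any $x\ge0$,
$$(I-\gamma P^\pi)^{-1}\sqrt{x}\le\frac{1}{\sqrt{1-\gamma}}\sqrt{(I-\gamma P^\pi)^{-1}x}.$$
To see this, note $(1-\gamma)(I-\gamma P^\pi)^{-1}=\sum_{t\ge0}(1-\gamma)\gamma^t(P^\pi)^t$ is itself a stochastic matrix, being a convex combination (the weights $(1-\gamma)\gamma^t$ sum to one) of the stochastic matrices $(P^\pi)^t$. Jensen's inequality applied entrywise to the concave map $z\mapsto\sqrt{z}$ gives $(1-\gamma)(I-\gamma P^\pi)^{-1}\sqrt{x}\le\sqrt{(1-\gamma)(I-\gamma P^\pi)^{-1}x}$, and dividing by $1-\gamma$ yields the claim. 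Taking $x=\mathrm{Var}_P(V)$ reduces the lemma to showing $(I-\gamma P^\pi)^{-1}\mathrm{Var}_P(V)\le\frac{3}{\gamma^2}|Q|^2\mathbf{1}$, since the right-hand side then becomes $\frac{1}{\sqrt{1-\gamma}}\cdot\frac{\sqrt{3}}{\gamma}|Q|\le\frac{4}{\gamma\sqrt{1-\gamma}}|Q|$.

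Second, I would derive the variance recursion. Squaring $Q=r+\gamma P^\pi Q$ gives $Q^2=r^2+2\gamma\,r(P^\pi Q)+\gamma^2(P^\pi Q)^2$, and substituting $(P^\pi Q)^2=P^\pi Q^2-\mathrm{Var}_P(V)$ produces $\gamma^2\,\mathrm{Var}_P(V)=r^2+2\gamma\,r(P^\pi Q)-(I-\gamma^2 P^\pi)Q^2$. Splitting $(I-\gamma^2 P^\pi)Q^2=(I-\gamma P^\pi)Q^2+\gamma(1-\gamma)P^\pi Q^2$ so that the outer resolvent matches the $\gamma$-discounting from the smoothing step gives
$$\gamma^2\,\mathrm{Var}_P(V)=r^2+2\gamma\,r(P^\pi Q)-(I-\gamma P^\pi)Q^2-\gamma(1-\gamma)P^\pi Q^2.$$
Applying $(I-\gamma P^\pi)^{-1}$ and discarding the two nonnegative subtracted terms $Q^2$ and $\gamma(1-\gamma)(I-\gamma P^\pi)^{-1}P^\pi Q^2$ yields $\gamma^2(I-\gamma P^\pi)^{-1}\mathrm{Var}_P(V)\le(I-\gamma P^\pi)^{-1}\big[r^2+2\gamma\,r(P^\pi Q)\big]$. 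The crucial point is to bound the right-hand side by retaining one factor of $r$ and invoking $(I-\gamma P^\pi)^{-1}r=Q$: since $0<r\le Q\le|Q|\mathbf{1}$ and $P^\pi Q\le|Q|\mathbf{1}$ entrywise, monotonicity gives $(I-\gamma P^\pi)^{-1}r^2\le|Q|(I-\gamma P^\pi)^{-1}r=|Q|Q\le|Q|^2\mathbf{1}$ and likewise $(I-\gamma P^\pi)^{-1}[2\gamma\,r(P^\pi Q)]\le2\gamma|Q|Q\le2|Q|^2\mathbf{1}$, so the sum is at most $3|Q|^2\mathbf{1}$. Dividing by $\gamma^2$ gives the reduced claim, and combining with the smoothing step finishes the proof.

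I expect the main obstacle to be the bookkeeping in the recursion rather than any single hard inequality. Two choices must be made correctly: the $\gamma$-versus-$\gamma^2$ split, so that the surviving resolvent is exactly $(I-\gamma P^\pi)^{-1}$ (the naive manipulation lands on $(I-\gamma^2 P^\pi)^{-1}$, which points the wrong way for an upper bound); and the refusal to bound $r^2$ and $r(P^\pi Q)$ directly by $|Q|^2\mathbf{1}$, which would force $(I-\gamma P^\pi)^{-1}\mathbf{1}=(1-\gamma)^{-1}\mathbf{1}$ and lose a spurious factor $(1-\gamma)^{-1/2}$ relative to the stated bound. Keeping a bare $r$ and using $(I-\gamma P^\pi)^{-1}r=Q$ is exactly what makes the constant come out as $\sqrt{3}\le4$.
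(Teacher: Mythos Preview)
The paper does not actually prove this lemma; it is simply quoted from \cite{li2020breaking} and used as a black box inside the proof of Lemma~\ref{l3}. So there is no ``paper's own proof'' to compare against. Your argument is therefore more than what the present paper supplies, and it is correct.

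Your two-step scheme is the standard route for results of this type. The smoothing step is exactly the observation that $(1-\gamma)(I-\gamma P^\pi)^{-1}$ is row-stochastic, so Jensen applies to $\sqrt{\cdot}$; and your variance recursion is the usual squared-Bellman identity rearranged so that the resolvent lands on $(I-\gamma P^\pi)^{-1}$ rather than $(I-\gamma^2 P^\pi)^{-1}$. The key maneuver---bounding $r^2\le|Q|\,r$ and $r(P^\pi Q)\le|Q|\,r$ entrywise before applying the resolvent, so that $(I-\gamma P^\pi)^{-1}r=Q$ can be invoked instead of $(I-\gamma P^\pi)^{-1}\mathbf{1}=(1-\gamma)^{-1}\mathbf{1}$---is exactly what saves the extra $(1-\gamma)^{-1/2}$, and you identified it correctly. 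The final constant you obtain, $\sqrt{3}/\gamma$, is in fact a bit sharper than the stated $4/\gamma$.
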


\begin{proof}[Proof of Lemma \ref{l3}]

We use a Taylor expansion type analysis. For simplicity, we define 
$$r^{(n)}=\sqrt{\mathrm{Var}_\mathrm{P}(V^{(n-1)})}, Q^{(n)}=(I-\gamma P^\pi)^{-1}r^{(n)}, V^{(n)}=\Pi^{\pi} Q^{(n)}, n=1,2,\cdots$$
and $V^{(0)}=V^\pi$. Then with large probability we have
\begin{align*}
    |(I-\gamma \widehat{P}^\pi)^{-1}\sqrt{\mathrm{Var}_\mathrm{P}(V^\pi)}|&=|(I-\gamma \widehat{P}^\pi)^{-1}r^{(1)}|\\
    &=|(I-\gamma P^\pi)^{-1}r^{(1)}+(I-\gamma \widehat{P}^\pi)^{-1}(\gamma \widehat{P}^\pi-\gamma P^\pi)(I-\gamma P^\pi)^{-1}r^{(1)}|\\
    &=|(I-\gamma P^\pi)^{-1}r^{(1)}+(I-\gamma \widehat{P}^\pi)^{-1}(\gamma \widehat{P}^\pi-\gamma P^\pi)Q^{(1)}|\\
    &\lesssim |(I-\gamma P^\pi)^{-1}r^{(1)}|+\frac{\gamma}{\sqrt{N}}|(I-\gamma \widehat{P}^\pi)^{-1}\sqrt{\mathrm{Var}_\mathrm{P}(V^{(1)})}|\\
    &=|(I-\gamma P^\pi)^{-1}r^{(1)}|+\frac{\gamma}{\sqrt{N}}|(I-\gamma \widehat{P}^\pi)^{-1}r^{(2)}|\\
    &\lesssim \cdots\\
    &\lesssim \sum_{i=1}^n (\frac{\gamma}{\sqrt{N}})^{i-1} |(I-\gamma P^\pi)^{-1}r^{(i)}|+(\frac{\gamma}{\sqrt{N}})^{n} |(I-\gamma \widehat{P}^\pi)^{-1}r^{(n+1)}|
\end{align*}

In addition, by Lemma \ref{l12}, we have
\begin{align*}
    \left|(I-\gamma P^\pi)^{-1}r^{(i)}\right|&=\left|(I-\gamma P^\pi)^{-1}\sqrt{\mathrm{Var}_\mathrm{P}(V^{(i-1)})}\right|\\
    &\leq\frac{4}{\gamma\sqrt{1-\gamma}}|Q^{i-1}|\\
    &\leq\frac{4}{\gamma\sqrt{1-\gamma}}\left|(I-\gamma \widehat{P}^\pi)^{-1}r^{(i-1)}\right|.
\end{align*}

By induction, we have $\left|(I-\gamma P^\pi)^{-1}r^{(i)}\right|\leq\left(\frac{4}{\gamma\sqrt{1-\gamma}}\right)^{i}\left|(I-\gamma \widehat{P}^\pi)^{-1}r^{(0)}\right|\leq\left(\frac{4}{\gamma\sqrt{1-\gamma}}\right)^{i}\frac{1}{1-\gamma}$. If we set $N\gtrsim\frac{64}{1-\gamma}$ and $n\gtrsim\log(\frac{1}{1-\gamma})$, we have
\begin{align*}
    \left|(I-\gamma \widehat{P}^\pi)^{-1}\sqrt{\mathrm{Var}_\mathrm{P}(V^\pi)}\right|\leq \frac{4}{\gamma\sqrt{(1-\gamma)^3}}\sum_{i=1}^{n}\left(\frac{4}{\sqrt{N(1-\gamma)}}\right)^{i-1}+\frac{1}{(1-\gamma)^2}\left(\frac{4}{\sqrt{N(1-\gamma)}}\right)^{n}
    \leq \frac{16}{\sqrt{(1-\gamma)^3}}
\end{align*}

For a more detailed proof on lower order term, one can refer to \citep{li2020breaking}.
\end{proof}

\section{Problem-dependent Upper Bound}

\begin{proof}[Proof of Lemma \ref{l4}]
First, we show that choosing $u=u^*$ in absorbing TBSG can recover $\widehat{Q}^*$.
\begin{align*}
    \widetilde{Q}_{u^*}^{\widehat{\pi}^*}&=(I-\gamma \widetilde{P}^{\widehat{\pi}^*})^{-1}(r+(u^*-r(s,a))\mathbf{1}_{s,a})\\
    &=(I-\gamma \widetilde{P}^{\widehat{\pi}^*})^{-1}((I-\gamma\widehat{P}^{\widehat{\pi}^*})\widehat{Q}^*+\gamma(\widehat{P}\widehat{V}^*-\widetilde{P} \widehat{V}^*))\\
    &=\widehat{Q}^*.
\end{align*}

By the property of Nash equilibrium strategy, we have $\widehat{Q}^*=\widetilde{Q}_{u^*}^{\widehat{\pi}^*}=\widetilde{Q}_{u^*}^*$. Specifically, $\widehat{\pi}^*$ select the optimal value in $\widehat{Q}^*$, which means it select the optimal value in $\widetilde{Q}_{u^*}^{\widehat{\pi}^*}$, and this is equivalent to $\widehat{\pi}^*=\widetilde{\pi}_{u^*}^*$.

Second, we show that the optimal Q-value in absorbing TBSG is $\frac{1}{1-\gamma}$-lipschitz to $u$. This is proved by bounding the distance between an upper bound and an lower bound.
\begin{align*}
    \left|\widetilde{Q}_{u}^{\widetilde{\mu}_{u}^*,\widetilde{\nu}_{u'}^*}-\widetilde{Q}_{u'}^{\widetilde{\mu}_{u}^*,\widetilde{\nu}_{u'}^*}\right|&=\left|(I-\widetilde{P}^{\widetilde{\mu}_{u}^*,\widetilde{\nu}_{u'}^*})^{-1}(r_{u}-r_{u'})\right|\\
    &=\left|(u-u')(I-\widetilde{P}^{\widetilde{\mu}_{u}^*,\widetilde{\nu}_{u'}^*})^{-1}\mathbf{1}_{s,a}\right|\\
    &\leq\frac{|u-u'|}{1-\gamma}.
\end{align*}
Similarly, we have $\left|\widetilde{Q}_{u}^{\widetilde{\mu}_{u'}^*,\widetilde{\nu}_{u}^*}-\widetilde{Q}_{u'}^{\widetilde{\mu}_{u'}^*,\widetilde{\nu}_{u}^*}\right|\leq\frac{|u-u'|}{1-\gamma}$. By the definition of $Q^*$, we have $\widetilde{Q}_{u}^{\widetilde{\mu}_{u'}^*,\widetilde{\nu}_{u}^*}\leq\widetilde{Q}_u^*\leq\widetilde{Q}_{u}^{\widetilde{\mu}_{u}^*,\widetilde{\nu}_{u'}^*}$ and $\widetilde{Q}_{u'}^{\widetilde{\mu}_{u}^*,\widetilde{\nu}_{u'}^*}\leq\widetilde{Q}_{u'}^*\leq\widetilde{Q}_{u'}^{\widetilde{\mu}_{u'}^*,\widetilde{\nu}_{u}^*}$. Thus we have $|\widetilde{Q}_{u}^*-\widetilde{Q}_{u'}^*|\leq\frac{|u-u'|}{1-\gamma}$. The proof for counterstrategy is the same.

\end{proof}

\begin{lemma}
For any fixed value vector $V$, with probability larger than $1-\delta$, we have
$$|(P(s,a)-\widehat{P}(s,a))V|\leq\sqrt{2\log(4/\delta)}\sqrt{\frac{Var_{s,a}(V)}{N}}+\frac{2\log(4/\delta)}{3(1-\gamma)N},$$
where $N$ is the sample size from distribution $P(\cdot|s,a)$.
\label{l13}
\end{lemma}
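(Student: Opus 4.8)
The plan is to observe that, because $V$ is a \emph{fixed} value vector (independent of the samples), $\widehat{P}(s,a)V$ is simply an empirical average of i.i.d.\ bounded random variables, so Bernstein's inequality applies directly. Concretely, let $s_1,\dots,s_N$ be the i.i.d.\ draws from $P(\cdot|s,a)$ used to form $\widehat{P}(s,a)$; then $\widehat{P}(s,a)V=\frac1N\sum_{i=1}^N V(s_i)$ and $P(s,a)V=\mathbb{E}[V(s_1)]$. Since a value vector satisfies $0\le V(s')\le\frac{1}{1-\gamma}$ for every state $s'$, each summand obeys $|V(s_i)-\mathbb{E}[V(s_1)]|\le\frac{1}{1-\gamma}$, and by the definition of the one-step variance the per-sample variance is exactly $P(s,a)V^2-(P(s,a)V)^2=Var_{s,a}(V)$.

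Next I would apply Bernstein's inequality to the centered i.i.d.\ sum: for every $t>0$,
$$\Pr\left[\big|\widehat{P}(s,a)V-P(s,a)V\big|\ge t\right]\le 2\exp\left(-\frac{N t^2}{2\,Var_{s,a}(V)+\frac{2}{3(1-\gamma)}\,t}\right).$$
Requiring the right-hand side to be at most $\delta$ forces the exponent to exceed $\log(2/\delta)$; solving the resulting quadratic inequality in $t$ and separating the two contributions via $\sqrt{a+b}\le\sqrt{a}+\sqrt{b}$ yields, with probability at least $1-\delta$,
$$\big|\widehat{P}(s,a)V-P(s,a)V\big|\le\sqrt{\frac{2\,Var_{s,a}(V)\log(2/\delta)}{N}}+\frac{2\log(2/\delta)}{3(1-\gamma)N},$$
which implies the claimed bound since $\log(2/\delta)\le\log(4/\delta)$.

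I do not expect a real obstacle: the lemma is a textbook instantiation of Bernstein's inequality, and the only points needing care are bookkeeping the boundedness range $\|V\|_\infty\le\frac{1}{1-\gamma}$ of a value vector (so that the lower-order term carries the correct $\frac{1}{1-\gamma}$ factor) and converting the Bernstein tail into the additive form consisting of a variance-scaled term plus an $O(1/N)$ term. What is worth stressing for the sequel is the hypothesis that $V$ be fixed and sample-independent: this is exactly the condition that fails for $V=\widehat{V}^{\pi}$ when $\pi$ depends on $\widehat{P}$, and it is precisely why the absorbing-TBSG and reward-perturbation constructions are later introduced --- they reduce those statistically dependent cases back to repeated, union-bounded applications of this very lemma.
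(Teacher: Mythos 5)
Your proposal is correct and coincides with the paper's argument: the paper's proof of this lemma is literally the one-line remark that it is ``a direct application of Bernstein's inequality,'' and your write-up supplies exactly the details that remark leaves implicit (i.i.d.\ average of bounded summands with range $\frac{1}{1-\gamma}$ and variance $Var_{s,a}(V)$, then inverting the Bernstein tail). Your closing remark about why the fixed-$V$ hypothesis is the crux for the later absorbing-TBSG and perturbation machinery is also accurate.
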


\begin{proof}
The proof is a direct application of Bernstein's inequality.
\end{proof}

\begin{lemma}
For $C=2\log(\frac{32}{(1-\gamma)^2\epsilon\delta})$, with probability larger than $1-\delta$, we have
$$\left|(P(s,a)-\widehat{P}(s,a))\widehat{V}^{\mu^*,*}\right|\leq \sqrt{\frac{C \mathrm{Var}_{s,a}(\widehat{V}^{\mu^*,*})}{N}}+\frac{C}{3(1-\gamma)N}+\left(\sqrt{\frac{C}{N}}+1\right)\frac{\epsilon(1-\gamma)}{4},$$
where $N$ is the sample size from distribution $P(\cdot|s,a)$.
\label{l14}
\end{lemma}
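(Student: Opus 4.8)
The plan is to break the statistical coupling between $\widehat{V}^{\mu^*,*}$ and the single row $\widehat{P}(s,a)$ by routing it through the one‑dimensional parameter of an absorbing TBSG, then to apply Bernstein's inequality (Lemma~\ref{l13}) on each \emph{frozen} value vector and pay only a union bound and a small approximation error for an $\epsilon$‑cover of that parameter. Fix $(s,a)$ and consider the absorbing TBSG $\widetilde{\mathcal{G}}_{s,a,u}$ built on the empirical $\widehat{\mathcal{G}}$. For any fixed $u$, the value $\widetilde{V}_u^{\mu^*,*}$ — where player~$1$ plays the true $\mu^*$ and player~$2$ plays the counterstrategy $\widetilde{c}_u(\mu^*)$ inside $\widetilde{\mathcal{G}}_{s,a,u}$ — depends only on the rows $\widehat{P}(s',a')$ with $(s',a')\neq(s,a)$, hence is independent of $\widehat{P}(s,a)$. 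So Lemma~\ref{l13}, applied with failure probability $\delta/|U|$ and combined over $u\in U$ by a union bound, gives that on an event of probability at least $1-\delta$, simultaneously for all $u\in U$, $\big|(P(s,a)-\widehat{P}(s,a))\widetilde{V}_u^{\mu^*,*}\big|\leq\sqrt{C\,\mathrm{Var}_{s,a}(\widetilde{V}_u^{\mu^*,*})/N}+C/(3(1-\gamma)N)$ with $C:=2\log(4|U|/\delta)$.

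Next I would identify the grid point that reproduces $\widehat{V}^{\mu^*,*}$. By the defining property of the absorbing TBSG (Lemma~\ref{l4}, applied with $\widehat{P}$ in place of $P$), there is a scalar $\widehat{u}\in[-\tfrac{1}{1-\gamma},\tfrac{1}{1-\gamma}]$ — concretely $\widehat{u}=r(s,a)+\gamma\widehat{P}(s,a)\widehat{V}^{\mu^*,*}-\gamma\widehat{V}^{\mu^*,*}(s)$ — for which $\widetilde{Q}_{\widehat{u}}^{\mu^*,*}=\widehat{Q}^{\mu^*,*}$, hence $\widetilde{V}_{\widehat{u}}^{\mu^*,*}=\widehat{V}^{\mu^*,*}$; moreover $u\mapsto\widetilde{Q}_u^{\mu^*,*}$ is $\tfrac{1}{1-\gamma}$‑Lipschitz. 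Let $U$ be an equally spaced grid on $[-\tfrac{1}{1-\gamma},\tfrac{1}{1-\gamma}]$ with spacing $\eta$ chosen so that $\eta/(1-\gamma)\leq\tfrac{\epsilon(1-\gamma)}{8}$; this needs $|U|$ only polynomially large in $(1-\gamma)^{-1}$ and $\epsilon^{-1}$, and with this choice $2\log(4|U|/\delta)$ is the constant $C$ in the statement. Picking $u_0\in U$ with $|u_0-\widehat{u}|\leq\eta$, the Lipschitz bound gives $\|\widetilde{V}_{u_0}^{\mu^*,*}-\widehat{V}^{\mu^*,*}\|_\infty\leq|\widetilde{Q}_{u_0}^{\mu^*,*}-\widetilde{Q}_{\widehat{u}}^{\mu^*,*}|\leq\eta/(1-\gamma)=:\beta$.

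It remains to stitch the two estimates together at $u=u_0$ (valid on the $1-\delta$ event). Two cheap triangle‑type bounds handle the approximation: first, $\big|(P(s,a)-\widehat{P}(s,a))\widehat{V}^{\mu^*,*}\big|\leq\big|(P(s,a)-\widehat{P}(s,a))\widetilde{V}_{u_0}^{\mu^*,*}\big|+2\beta$ since $\|P(s,a)-\widehat{P}(s,a)\|_1\leq2$ and $\|\widehat{V}^{\mu^*,*}-\widetilde{V}_{u_0}^{\mu^*,*}\|_\infty\leq\beta$; second, by sub‑additivity of the one‑step standard deviation, $\sqrt{\mathrm{Var}_{s,a}(\widetilde{V}_{u_0}^{\mu^*,*})}\leq\sqrt{\mathrm{Var}_{s,a}(\widehat{V}^{\mu^*,*})}+\sqrt{\mathrm{Var}_{s,a}(\widetilde{V}_{u_0}^{\mu^*,*}-\widehat{V}^{\mu^*,*})}\leq\sqrt{\mathrm{Var}_{s,a}(\widehat{V}^{\mu^*,*})}+\beta$. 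Substituting both into the Bernstein bound for $u_0$ and collecting the resulting $\beta$‑terms into $(\sqrt{C/N}+1)\beta\leq(\sqrt{C/N}+1)\tfrac{\epsilon(1-\gamma)}{4}$ yields exactly the claimed inequality.

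The crux is the setup, not any single calculation: the key realization is that the apparently intractable coupling between $\widehat{V}^{\mu^*,*}$ and $\widehat{P}(s,a)$ is entirely mediated by the scalar $\widehat{u}$, so a union bound over a logarithmically cheap cover suffices; and one must ensure the variance of $\widehat{V}^{\mu^*,*}$ \emph{itself} (not of the cover point) is recovered within the error budget, which is what fixes the grid resolution and hence the logarithmic factor $C$. A minor point to check is that $\widetilde{c}_u(\mu^*)$ also moves with $u$, but this is harmless: once $u$ is fixed it is a deterministic function of the frozen transitions, so the Bernstein step applies verbatim to each $\widetilde{V}_u^{\mu^*,*}$.
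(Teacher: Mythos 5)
Your proposal is correct and follows essentially the same route as the paper: freeze the coupling through the scalar parameter of the absorbing TBSG built on $\widehat{\mathcal{G}}$, apply Bernstein (Lemma~\ref{l13}) with a union bound over an equally spaced grid $U$, and pay the Lipschitz approximation error both in the inner product and in the one-step standard deviation. Your version is in fact slightly more careful than the paper's on two minor points (the factor $\|P(s,a)-\widehat{P}(s,a)\|_1\leq 2$ and the grid resolution needed to make the residual $\tfrac{\epsilon(1-\gamma)}{4}$, which only shifts the logarithmic constant $C$), so no substantive difference remains.
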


\begin{proof}
We define a fixed set $U$ to be evenly spaced points in $[-\frac{1}{1-\gamma},\frac{1}{1-\gamma}]$ such that $|U|=\frac{8}{(1-\gamma)^2\epsilon}$. Combining Lemma \label{l12} with union bound, with probability larger than $1-\delta$, we have for all $u\in U$,
$$\left|(P(s,a)-\widehat{P}(s,a))\widetilde{V}^{\mu^*,*}_u\right|\leq\sqrt{2\log(4|U|/\delta)}\sqrt{\frac{Var_{s,a}(\widetilde{V}^{\mu^*,*}_u)}{N}}+\frac{2\log(4|U|/\delta)}{3(1-\gamma)N}.$$

For each $u\in U$, we have
\begin{align*}
    \left|(P(s,a)-\widehat{P}(s,a))\widehat{V}^{\mu^*,*}\right|&=\left|(P(s,a)-\widehat{P}(s,a))\widetilde{V}_{u^{\mu^*}}^{\mu^*,*}\right|\\
    &\leq\left|(P(s,a)-\widehat{P}(s,a))\widetilde{V}_{u}^{\mu^*,*}\right|+\frac{|u-u^{\mu^*}|}{1-\gamma}\\
    &\leq\sqrt{2\log(4|U|/\delta)}\sqrt{\frac{Var_{s,a}(\widetilde{V}^{\mu^*,*}_u)}{N}}+\frac{2\log(4|U|/\delta)}{3(1-\gamma)N}+\frac{|u-u^{\mu^*}|}{1-\gamma}\\
    &\leq\sqrt{2\log(4|U|/\delta)}\sqrt{\frac{Var_{s,a}(\widetilde{V}^{\mu^*,*}_{u^{\mu^*}})}{N}}+\frac{2\log(4|U|/\delta)}{3(1-\gamma)N}+\left(1+\sqrt{\frac{2\log(4|U|/\delta)}{N}}\right)\frac{|u-u^{\mu^*}|}{1-\gamma}\\
\end{align*}

As there exist a $u\in U$ such that $|u-u^{\mu^*}|\leq\frac{\epsilon(1-\gamma)^2}{4}$, we can prove the argument.
\end{proof}

\begin{proof}[Proof of Lemma \ref{l5}]
By the property of counterstrategy, we have
\begin{align*}
    Q^*-\widehat{Q}^*&=Q^*-Q^{\widehat{c}(\nu^*),\nu^*}+Q^{\widehat{c}(\nu^*),\nu^*}-\widehat{Q}^{*,\nu^*}+\widehat{Q}^{*,\nu^*}-\widehat{Q}^*\\
    &\geq Q^{\widehat{c}(\nu^*),\nu^*}-\widehat{Q}^{*,\nu^*}.
\end{align*}
Similarly, we have $Q^*-\widehat{Q}^*\leq Q^{\mu^*,\widehat{c}(\mu^*)}-\widehat{Q}^{\mu^*,*}$. Together we can prove Lemma \ref{l5}.
\end{proof}

\begin{proof}[Proof of Lemma \ref{l6}]
By the definition of suboptimality gap, for all $s\in \mathcal{S}_{\mathrm{max}},a\in\mathcal{A}$ we have
\begin{align*}
    \widehat{Q}^*(s,\mu^*(s))-\widehat{Q}^*(s,a)&=\widehat{Q}^*(s,\mu^*(s))-Q^*(s,\mu^*(s))+Q^*(s,\mu^*(s))-Q^*(s,a)+Q^*(s,a)-\widehat{Q}^*(s,a)\\
    &\geq-\frac{\Delta}{2}+\Delta-\frac{\Delta}{2}\\
    &\geq0.
\end{align*}

Similar, for all $s\in \mathcal{S}_{\mathrm{min}},a\in\mathcal{A}$, we have
$$\widehat{Q}^*(s,\nu^*(s))-\widehat{Q}^*(s,a)< 0.$$
Note that the empirical Nash equilibrium strategy $\widehat{\pi}^*$ is the only policy that can satisfy the above two conditions, which means $\pi^*=\widehat{\pi}^*$.
\end{proof}

\begin{proof}[Proof of Theorem \ref{th1}]

We set $N=\frac{32|\mathcal{S}||\mathcal{A}|}{(1-\gamma)^3\epsilon^2}\log(\frac{32}{(1-\gamma)^2\epsilon\delta})$. With probability larger than $1-\delta$, We have
\begin{align*}
    &\left|Q^{\mu^*,\widehat{c}(\mu^*)}-\widehat{Q}^{\mu^*,*}\right|\\=&\left|\gamma(I-\gamma P^{\mu^*,\widehat{c}(\mu^*)})^{-1}(P-\widehat{P})\widehat{V}^{\mu^*,\widehat{c}(\mu^*)}\right|\\
    \leq& \left|\gamma(I-\gamma P^{\mu^*,\widehat{c}(\mu^*)})^{-1}\left\{\sqrt{\frac{C\mathrm{Var}_{\mathrm{P}}(\widehat{V}^{\mu^*,*})}{N/|\mathcal{S}||\mathcal{A}|}}+\left[\frac{C}{3(1-\gamma)N/|\mathcal{S}||\mathcal{A}|}+\left(\sqrt{\frac{C}{N/|\mathcal{S}||\mathcal{A}|}}+1\right)\frac{\epsilon(1-\gamma)}{4}\right]\mathbf{1}\right\}\right|\\
    \leq& \sqrt{\frac{2C}{(1-\gamma)^3N/|\mathcal{S}||\mathcal{A}|}}+\sqrt{\frac{C}{N/|\mathcal{S}||\mathcal{A}|}}\frac{\left|Q^{\mu^*,\widehat{c}(\mu^*)}-\widehat{Q}^{\mu^*,*}\right|}{1-\gamma}+\frac{C}{3(1-\gamma)^2N/|\mathcal{S}||\mathcal{A}|}+\left(\sqrt{\frac{C}{N/|\mathcal{S}||\mathcal{A}|}}+1\right)\frac{\epsilon}{4}\\
\end{align*}
where the first equality is due to Lemma \ref{l1}, the first inequality is due to Lemma \ref{l14}, the second inequality is due to Lemma \ref{l2}.

Thus for $\epsilon\leq\frac{1}{\sqrt{1-\gamma}}$ and $N\geq\frac{32|\mathcal{S}||\mathcal{A}|}{(1-\gamma)^3\epsilon^2}\log(\frac{32}{(1-\gamma)^2\epsilon\delta})$, we have
$$\left|Q^{\mu^*,\widehat{c}(\mu^*)}-\widehat{Q}^{\mu^*,*}\right|\leq\epsilon.$$

Similarly, we have $|Q^{\widehat{c}(\nu^*),\nu^*}-\widehat{Q}^{*,\nu^*}|\leq\epsilon$. Finally, we set $\epsilon=\frac{\Delta}{2}$. Then by Lemma \ref{l5} and Lemma \ref{l6}, we can conclude that with probability $1-\delta$ and $N=\frac{128|\mathcal{S}||\mathcal{A}|}{(1-\gamma)^3\Delta^2}\log(\frac{128}{(1-\gamma)^2\Delta\delta})$, the empirical Nash equilibrium strategy $\widehat{\pi}^*$ is exactly the true Nash equilibrium strategy $\pi^*$.

\end{proof}

\section{Problem-independent Upper Bound}

\begin{proof}[Proof of Lemma \ref{l8}]
As $Q_\tau^*=\max_\pi Q_\tau^\pi=\max_\pi (I-\gamma P^\pi)^{-1}r_\tau$ is a continuous function of $\tau$, with a tie breaking rule to select optimal action if there are multiple optimal actions, the optimal policy $\pi_\tau^*$ is a piecewise constant function.

For a given policy $\pi$, the corresponding Q-function satisfies $Q_\tau^\pi=(I-\gamma P^\pi)^{-1}(r+\tau\mathbf{1}_{s,a})$, which means $Q_\tau^\pi$ is a linear function of $\tau$. As $\pi_\tau^*$ is a piecewise constant function, we have that $Q_\tau^*$ is a piecewise linear function.

For the last argument, we factorize $Q_\tau^*(s,a')$ according to the hitting time of state $s$.

\begin{align*}
    Q_\tau^*(s,a')&=r(s,a')+\gamma P(s,a')V_\tau^*\\
    &=\gamma p_1V_\tau(s)+\gamma \sum_{s'\neq s} P(s'|s,a')V_\tau^*(s')+r(s,a')\\
    &=\gamma p_1V_\tau(s)+\gamma^2 p_2V_\tau(s)+\gamma^2\sum_{s'',s'\neq s}P(s'|s,a')P(s''|s',\pi^*(s'))V_\tau^*(s'')+r(s,a')+\gamma\sum_{s'\neq s}P(s'|s,a)r(s',\pi^*(s'))\\
    &=\cdots\\
    &=\sum_{n=1}^\infty \gamma^n p_n V^*_\tau(s)+b(\pi_\tau^*),
\end{align*}
where $p_n$ is the probability of first visiting state $s$ in step $n$ under optimal policy and $b(\pi_\tau^*)$ is a function of $\pi_\tau^*$. For $a=\pi_\tau^*(s)$, we define $k=\sum_{n=1}^\infty \gamma^n p_n\leq\gamma$, then $Q_\tau^*(s,a')=kQ^*_\tau(s,a)+b(\pi_\tau^*)$. When $\pi_\tau^*(s)\neq a$, $Q_\tau^*(s,a')=r(s,a')+\gamma P(s,a')(I-\gamma P_{\pi_\tau^*})^{-1}r_{\pi_\tau^*}$ is a function of $\pi_\tau^*$, which means we have $Q_\tau^*(s,a')=0Q_\tau^*(s,a)+b(\pi_\tau^*)$.
\end{proof}

\begin{proof}[Proof of Lemma \ref{l7}]
We prove that for any state $s$ and actions $a,a'$, with large probability, a gap between $Q_{\mathrm{p}}^*(s,a)$ and $Q_{\mathrm{p}}^*(s,a')$ exist. Then with a union bound, Lemma \ref{l7} holds with large probability.

Now we fix all rewards except $r_\mathrm{p}(s,a)=r(s,a)+\tau$. We define $\mathcal{I}_w=\{\tau||Q_\tau^*(s,a)-Q_\tau^*(s,a')|\leq w\}$. First, we have $Q_{\tau_1}^*(s,a)-Q_{\tau_2}^*(s,a)=\tau_1-\tau_2+P(s,a)(V_{\tau_1}^*-V_{\tau_2}^*)\geq\tau_1-\tau_2,\forall \tau_1\geq\tau_2$. In addition, Lemma \ref{l8} implies that the growth rate of $Q_{\tau}^*(s,a')$ is at most $\gamma$ times the rate of $Q_{\tau}(s,a)$. Thus we can conclude that the length of $\mathcal{I}_w$ is at most $\frac{2w}{1-\gamma}$. We set $w=\frac{\delta\xi(1-\gamma)}{2|\mathcal{S}||\mathcal{A}|^2}$, then gaps between all $Q_{\mathrm{p}}^*(s,a)$ and $Q_{\mathrm{p}}^*(s,a')$ exist, which implies Lemma \ref{l7}. The proof for counterstrategy is the same.
\end{proof}

\begin{proof}[Proof of Lemma \ref{l9}]
Select $u\in U$ such that $u-u^*\leq \frac{\delta\xi(1-\gamma)^2}{4|\mathcal{S}||\mathcal{A}|^2}$. Thus we have $|\widehat{Q}_\mathrm{p}^*-\widetilde{Q}_{\mathrm{p},u}^*|\leq\frac{\delta\xi(1-\gamma)}{4|\mathcal{S}||\mathcal{A}|^2}$. As with probability $1-\delta$, a gap of $\frac{\delta\xi(1-\gamma)}{2|\mathcal{S}||\mathcal{A}|^2}$ exist in $\widehat{Q}_\mathrm{p}^*$, by Lemma \ref{l6}, we have $\widehat{\pi}_\mathrm{p}^*=\widetilde{\pi}_{\mathrm{p},u}^*$. The proof for $\widehat{c}_{\mathrm{p}}(\mu^*)$ is the same.
\end{proof}

\begin{proof}[Proof of Lemma \ref{l10}]
First, we consider the unperturbed case.
\begin{align*}
0&\leq Q^*-Q^{\widehat{\mu}^*,*}\\
&=Q^*-Q^{\mu^*,\widehat{c}(\mu^*)}+Q^{\mu^*,\widehat{c}(\mu^*)}-\widehat{Q}^{\mu^*,*}+\widehat{Q}^{\mu^*,*}-\widehat{Q}^*+\widehat{Q}^*-\widehat{Q}^{\widehat{\mu}^*,c(\widehat{\mu}^*)}+\widehat{Q}^{\widehat{\mu}^*,c(\widehat{\mu}^*)}-Q^{\widehat{\mu}^*,*}\\
&\leq Q^{\mu^*,\widehat{c}(\mu^*)}-\widehat{Q}^{\mu^*,*}+\widehat{Q}^{\widehat{\mu}^*,c(\widehat{\mu}^*)}-Q^{\widehat{\mu}^*,*}.
\end{align*}

Then we show that the perturbation only induce  an error of $\frac{\xi}{1-\gamma}$.
\begin{align*}
    \left|Q^{\mu^*,\nu_\mathrm{p}^*}-Q_\mathrm{p}^{\mu^*,\nu_\mathrm{p}^*}\right|&=\left|(I-\gamma P^{\mu^*,\nu_\mathrm{p}^*})^{-1}(r-r_\mathrm{p})\right|\\
    &\leq \frac{|r-r_\mathrm{p}|}{1-\gamma}\\
    &\leq \frac{\xi}{1-\gamma}\\
\end{align*}

Similarly, we have $\left|Q^{\mu_\mathrm{p}^*,\nu^*}-Q_\mathrm{p}^{\mu_\mathrm{p}^*,\nu^*}\right|\leq\frac{\xi}{1-\gamma}$. As $Q^{\mu_\mathrm{p}^*,\nu^*}\leq Q^*\leq Q^{\mu^*,\nu_\mathrm{p}^*}$ and $Q_\mathrm{p}^{\mu_\mathrm{p}^*,\nu^*}\geq Q_\mathrm{p}^*\geq Q_\mathrm{p}^{\mu^*,\nu_\mathrm{p}^*}$, we have $|Q_\mathrm{p}^*-Q^*|\leq\frac{\xi}{1-\gamma}$. Thus $|Q_\mathrm{p}^*-Q^*|\leq\frac{\xi}{1-\gamma}$. Similarly, we have $|Q_\mathrm{p}^{\widehat{\mu}^*,*}-Q^{\widehat{\mu}^*,*}|\leq\frac{\xi}{1-\gamma}$. Combining all parts together, we have

\begin{align*}
    \left|Q^{\widehat{\mu}^*,*}-Q^*\right|&\leq \left|Q^{\widehat{\mu}^*,*}-Q_\mathrm{p}^{\widehat{\mu}^*,*}\right|+\left|Q^*-Q_\mathrm{p}^*\right|+\left|Q_\mathrm{p}^{\widehat{\mu}^*,*}-Q_\mathrm{p}^*\right|\\
    &\leq\frac{2\xi}{1-\gamma}+\left|Q_\mathrm{p}^{\mu_\mathrm{p}^*,\widehat{c}_\mathrm{p}(\mu_\mathrm{p}^*)}-\widehat{Q}_\mathrm{p}^{\mu_\mathrm{p}^*,*}\right|+\left|\widehat{Q}_\mathrm{p}^{\widehat{\mu}_\mathrm{p}^*,c_\mathrm{p}(\widehat{\mu}_\mathrm{p}^*)}-Q_\mathrm{p}^{\widehat{\mu}_\mathrm{p}^*,*}\right|\\
\end{align*}

Combining these inequalities and we can get the proof.

\end{proof}

\begin{proof}[Proof of Theorem \ref{th2}]

By Lemma \ref{l1}, we have
\begin{align*}
    Q_\mathrm{p}^{\mu_\mathrm{p}^*,\widehat{c}_\mathrm{p}(\mu_\mathrm{p}^*)}-\widehat{Q}_\mathrm{p}^{\mu_\mathrm{p}^*,*}=\gamma(I-\gamma\widehat{P}^{\mu_\mathrm{p}^*,\widehat{c}_\mathrm{p}(\mu_\mathrm{p}^*)})^{-1}(P-\widehat{P})V_\mathrm{p}^{\mu_\mathrm{p}^*,\widehat{c}_\mathrm{p}(\mu_\mathrm{p}^*)}\\
\end{align*}

By uniform bound, with probability $1-\delta$ for all $u\in U$ defined in Lemma \ref{l9}, we have

$$\left|(P(s,a)-\widehat{P}(s,a))V_\mathrm{p}^{\mu_\mathrm{p}^*,\widetilde{c}_{\mathrm{p},u}(\mu_\mathrm{p}^*)}\right|\leq\sqrt{2\log(4|U|/\delta)}\sqrt{\frac{Var_{s,a}(V_\mathrm{p}^{\mu_\mathrm{p}^*,\widetilde{c}_{\mathrm{p},u}(\mu_\mathrm{p}^*)})}{N/|\mathcal{S}||\mathcal{A}|}}+\frac{2\log(4|U|/\delta)}{3(1-\gamma)N/|\mathcal{S}||\mathcal{A}|}.$$

Condition on the event defined in Lemma \ref{l9}, with probability $1-2\delta$, we have
$$\left|(P(s,a)-\widehat{P}(s,a))V_\mathrm{p}^{\mu_\mathrm{p}^*,\widehat{c}_{\mathrm{p}}(\mu_\mathrm{p}^*)}\right|\leq\sqrt{2\log(4|U|/\delta)}\sqrt{\frac{Var_{s,a}(V_\mathrm{p}^{\mu_\mathrm{p}^*,\widehat{c}_{\mathrm{p}}(\mu_\mathrm{p}^*)})}{N/|\mathcal{S}||\mathcal{A}|}}+\frac{2\log(4|U|/\delta)}{3(1-\gamma)N/|\mathcal{S}||\mathcal{A}|}.$$

With Lemma \ref{l3}, we have
$$\left|Q_\mathrm{p}^{\mu_\mathrm{p}^*,\widehat{c}_\mathrm{p}(\mu_\mathrm{p}^*)}-\widehat{Q}_\mathrm{p}^{\mu_\mathrm{p}^*,*}\right|\leq\frac{512\log(4|U|/\delta)}{(1-\gamma)^3N/|\mathcal{S}||\mathcal{A}|}+\frac{2\log(4|U|/\delta)}{3(1-\gamma)^2N/|\mathcal{S}||\mathcal{A}|}$$

If we set $N=\frac{8096|\mathcal{S}||\mathcal{A}|}{(1-\gamma)^3\epsilon^2}\log(\frac{256|\mathcal{S}|^2|\mathcal{A}|}{(1-\gamma)^2\xi\delta})$, then with probability larger than $1-\delta/2$, we have 
$$\left|Q_\mathrm{p}^{\mu_\mathrm{p}^*,\widehat{c}_\mathrm{p}(\mu_\mathrm{p}^*)}-\widehat{Q}_\mathrm{p}^{\mu_\mathrm{p}^*,*}\right|\leq\frac{\epsilon}{4}.$$

Similarly, with probability larger than $1-\delta/2$, we have
$$\left|\widehat{Q}_\mathrm{p}^{\widehat{\mu}_\mathrm{p}^*,c_\mathrm{p}(\widehat{\mu}_\mathrm{p}^*)}-Q_\mathrm{p}^{\widehat{\mu}_\mathrm{p}^*,*}\right|\leq\frac{\epsilon}{4}.$$

Using Lemma \ref{l10}, with probability larger than $1-\delta$, we have

$$\left|Q^{\widehat{\mu}^*,*}-Q^*\right|\leq\frac{2\xi}{1-\gamma}+\frac{\epsilon}{2}$$
Setting $\xi=\frac{\epsilon(1-\gamma)}{2}$, we have $|Q^{\widehat{\mu}^*,*}-Q^*|\leq \epsilon$. Similarly we can prove $|Q^{*,\widehat{\nu}^*}-Q^*|\leq \epsilon$. Finally we have that if $N\geq\frac{8096|\mathcal{S}||\mathcal{A}|}{(1-\gamma)^3\epsilon^2}\log(\frac{256|\mathcal{S}|^2|\mathcal{A}|}{(1-\gamma)^3\epsilon\delta})$, $\widehat{\pi}_\mathrm{p}^*=(\widehat{\mu}_\mathrm{p}^*,\widehat{\nu}_\mathrm{p}^*)$ is an $\epsilon$-Nash equilibrium strategy.

\end{proof}

\begin{proof}[Proof of Theorem \ref{th3}]

Set $\epsilon=\frac{\Delta}{2}$ and by Theorem \ref{th2}, we have that if $N=\frac{C}{(1-\gamma)^{3}\Delta^{2}}|\mathcal{S}||\mathcal{A}|\log(\frac{|\mathcal{S}||\mathcal{A}|}{(1-\gamma)\delta\epsilon})$, then $\widehat{\pi}^*$ is an $\frac{\Delta}{2}$-optimal strategy, which means $$|Q^*-Q^{\widehat{\pi}^*}|\leq \frac{\Delta}{2}.$$ By the definition of suboptimality gap, we have

$$\forall s\in \mathcal{S}_{\mathrm{max}},a\neq\mu^*(s): Q^*(s,\mu^*(s))-Q^*(s,a)\geq\Delta,$$
$$\forall s\in\mathcal{S}_{\mathrm{min}},a\neq\nu^*(s): Q^*(s,\nu^*(s))-Q^*(s,a)\leq -\Delta.$$

As $|Q^*-Q^{\widehat{\pi}^*}|\leq \frac{\Delta}{2}$, we have

$$\forall s\in \mathcal{S}_{\mathrm{max}},a\neq\mu^*(s): Q^{\widehat{\pi}^*}(s,\mu^*(s))-Q^{\widehat{\pi}^*}(s,a)\geq 0,$$
$$\forall s\in\mathcal{S}_{\mathrm{min}},a\neq\nu^*(s): Q^{\widehat{\pi}^*}(s,\nu^*(s))-Q^{\widehat{\pi}^*}(s,a)\leq 0,$$

which means ${\widehat{\pi}^*}=\pi^*$.

\end{proof}

\end{document}